\newtheorem{assumption}{Assumption}
\newcommand{\xv}{{\mathbf x}}
\newcommand{\Iv}{{\mathbf I}}
\newcommand{\Xc}{{\mathcal X}}
\newcommand{\Tc}{{\mathcal T}}
\newcommand{\loss}{{\mathcal L}}
\newcommand{\data}{{\mathcal D}}
\newcommand{\one}{{\mathbbm{1}}}
\newcommand{\Norm}{{\mathcal{N}}}
\newtheorem{proposition}{Proposition}
\newtheorem{lemma}{Lemma}
\newcommand{\proposed}{{SurvITE}}
\newcommand{\IPM}{\text{IPM}}
\newcommand{\minus}{{-}} 
\newcommand{\PR}{{\mathbb P}}
\newcommand{\tmax}{{t_{\text{max}}}}
\newcolumntype{L}[1]{>{\raggedright\let\newline\\\arraybackslash\hspace{0pt}}m{#1}}
\newcolumntype{C}[1]{>{\centering\let\newline\\\arraybackslash\hspace{0pt}}m{#1}}
\newcolumntype{R}[1]{>{\raggedleft\let\newline\\\arraybackslash\hspace{0pt}}m{#1}}
\newcommand\defeq{\mathrel{\stackrel{\makebox[0pt]{\mbox{\normalfont\tiny def}}}{=}}}
\newcommand{\indep}{\small {\raisebox{0.05em}{\rotatebox[origin=c]{90}{$\models$}}}}
\newsavebox\CBox 
\def\textBF#1{\sbox\CBox{#1}\resizebox{\wd\CBox}{\ht\CBox}{\textbf{#1}}}
\title{SurvITE: Learning Heterogeneous Treatment Effects from Time-to-Event Data}
\author{%
 Alicia Curth\thanks{Equal contribution} \\
  University of Cambridge\\
  \texttt{amc253@cam.ac.uk} \\
  \And
  Changhee Lee$^{*}$ \\
  Chung-Ang University \\
 \texttt{changheelee@cau.ac.kr} \\
  \And
    Mihaela van der Schaar \\
  University of Cambridge \\
  University of California, Los Angeles \\
  The Alan Turing Institute\\
  \texttt{mv472@cam.ac.uk}
}
\begin{document}
\maketitle
\begin{abstract} We study the problem of inferring heterogeneous treatment effects from time-to-event data. While both the related problems of (i) estimating treatment effects for \textit{binary or continuous} outcomes and (ii) \textit{predicting survival} outcomes have been well studied in the recent machine learning literature, their combination -- albeit of high practical relevance -- has received considerably less attention. With the ultimate goal of reliably estimating the effects of treatments on instantaneous risk and survival probabilities, we focus on the problem of learning (discrete-time) treatment-specific conditional hazard functions. We find that unique challenges arise in this context due to a variety of covariate shift issues that go beyond a mere combination of well-studied confounding and censoring biases. We theoretically analyse their effects by adapting recent generalization bounds from domain adaptation and treatment effect estimation to our setting and discuss implications for model design. We use the resulting insights to propose a novel deep learning method for treatment-specific hazard estimation based on balancing representations. We investigate performance across a range of experimental settings and empirically confirm that our method outperforms baselines by addressing covariate shifts from various sources.

\end{abstract}

\section{Introduction}
The demand for methods evaluating the effect of treatments, policies and interventions on \textit{individuals} is rising as interest moves from estimating population effects to understanding effect heterogeneity in fields ranging from economics to medicine. Motivated by this, the literature proposing machine learning (ML) methods for estimating the effects of treatments on continuous (or binary) end-points has grown rapidly, most prominently using tree-based methods \cite{hill2011bayesian, athey2016recursive, wager2018estimation, athey2019generalized, hahn2017bayesian}, Gaussian processes \cite{alaa2017bayesian, alaa2018limits}, and, in particular, neural networks (NNs) \cite{johansson2016learning, Shalit:16, johansson2018learning, shi2019adapting, hassanpour2019counterfactual, hassanpour2020learning, assaad2021counterfactual, curth2020}.  In comparison, the ML literature on heterogeneous treatment effect (HTE) estimation with time-to-event outcomes is rather sparse. This is despite the immense practical relevance of this problem -- e.g. many clinical studies consider time-to-event outcomes; this could be the time to onset or progression of disease, the time to occurrence of an adverse event such as a stroke or heart attack, or the time until death of a patient.

In part, the scarcity of HTE methods may be due to time-to-event outcomes being inherently more challenging to model, which is attributable to two factors \cite{tutz2016modeling}: (i) time-to-event outcomes differ from standard regression targets as the main objects of interest are usually not only expected survival times but the \textit{dynamics of the underlying stochastic process}, captured by hazard and survival functions, and (ii) the presence of \textit{censoring}. This has led to the development of a rich literature on survival analysis particularly in (bio)statistics, see e.g. \cite{tutz2016modeling, klein2003survival}. Classically, the effects of treatments in clinical studies with time-to-event outcomes are assessed by examining the coefficient of a treatment indicator in a (semi-)parametric model, e.g. Cox proportional hazards model \cite{Cox:72}, which relies on the often unrealistic assumption that models are correctly specified. Instead, we therefore adopt the nonparametric viewpoint of van der Laan and colleagues \cite{van2011targeted, stitelman2010collaborative, stitelman2011targeted, cai2019targeted} who have developed tools to incorporate ML methods into the estimation of treatment-specific \textit{population average} parameters. Nonparametrically investigating treatment effect \textit{heterogeneity}, however, has been studied in much less detail in the survival context. While a number of tree-based methods have been proposed recently \cite{tabib2020non, henderson2020individualized, zhang2017mining, cui2020estimating}, NN-based methods lack extensions to the time-to-event setting despite their successful adoption for estimating the effects of treatments on other outcomes --  the only exception being \cite{chapfuwa2021enabling}, who directly model event times under different treatments with generative models. 

Instead of modeling event times directly like in \cite{chapfuwa2021enabling}, we consider adapting machine learning methods, with special focus on NNs, for estimation of (discrete-time) treatment-specific hazard functions. We do so because many target parameters of interest in studies with time-to-event outcomes are functions of the underlying temporal dynamics; that is, hazard functions can be used to directly compute (differences in) survival functions, (restricted) mean survival time, and hazard ratios. We begin by exploring and characterising the unique features of the survival treatment effect problem within the context of empirical risk minimization (ERM); to the best of our knowledge, such an investigation is lacking in previous work. In particular, we show that learning treatment-specific hazard functions is a challenging problem due to the potential presence of \textit{multiple} sources of \textit{covariate shift}: (i) non-randomized treatment assignment (confounding), (ii) informative censoring and (iii) a form of shift we term \textit{event-induced} covariate shift, all of which can impact the quality of hazard function estimates.  We then theoretically analyze the effects of said shifts on ERM, and use our insights to propose a new NN-based model for treatment effect estimation in the survival context.

\textbf{Contributions} (i) We identify and formalize key challenges of  heterogeneous treatment effect estimation in time-to-event data within the framework of  ERM. In particular, as discussed above, we show that when estimating treatment-specific hazard functions, \textit{multiple} sources of covariate shift arise. (ii) We theoretically analyse their effects by adapting recent generalization bounds from domain adaptation and treatment effect estimation to our setting and discuss implications for model design. This analysis provides new insights that are of independent interest also in the context of hazard function estimation in the absence of treatments. (iii) Based on these insights, we propose a new model (\proposed) relying on balanced representations that allows for estimation of treatment-specific target parameters (hazard and survival functions) in the survival context, as well as a sister model (SurvIHE), which can be used for individualized hazard estimation in standard survival settings (without treatments). We investigate performance across a range of experimental settings and empirically confirm that \proposed~outperforms a range of natural baselines by addressing covariate shifts from various sources.

\section{Problem Definition} \label{sec:problem_definition}
In this section, we discuss the problem setup of heterogeneous treatment effect estimation from time-to-event data, our target parameters and assumptions. In Appendix A, we present a self-contained introduction to and comparison with heterogeneous treatment effect estimation with standard (binary/continuous) outcomes.

\textbf{Problem setup.} Assume we observe a time-to-event dataset $\data = \{ (a_{i}, x_{i}, \tilde{\tau}_{i}, \delta_{i}) \}_{i=1}^{n}$ comprising realizations of the tuple $(A, X, \tilde{T}, \Delta) \sim \mathbb{P}$ for $n$ patients. 
Here, $X \in \Xc$ and $A \in \{0,1\}$ are random variables for a covariate vector describing patient characteristics and an indicator whether a binary treatment was administered at baseline, respectively.
Let $T \in \Tc$ and $C \in \Tc$ denote random variables for the time-to-event and the time-to-censoring; here, events are usually \textit{adverse}, e.g. progression/onset of disease or even death, and censoring indicates loss of follow-up for a patient. Then, the \textit{observed} time-to-event outcomes of each patient are described by $\tilde{T} = \min (T, C)$ and $\Delta = \one(T \leq C)$, which indicate the time elapsed until either an event or censoring occurs and whether the event was observed or not, respectively. 
Throughout, we treat survival time as discrete\footnote{Where necessary, discretization can be performed by transforming continuous-valued times into a set of contiguous time intervals, i.e., $T = \tau$ implies $T \in [t_{\tau}, t_{\tau} + \delta t)$ where $\delta t$ implies the temporal resolution.} and the time horizon as finite with pre-defined maximum $\tmax$, so that the set of possible survival times is $\Tc = \{1, \cdots, \tmax\}$. 

We transform the \textit{short} data structure outlined above to a so-called \textit{long} data structure which can be used to \textit{directly} estimate conditional hazard functions using standard machine learning methods \cite{stitelman2010collaborative}. We define two counting processes $N_T(t)$ and $N_C(t)$ which track events and censoring, i.e. $N_T(t)=\mathbbm{1}(\tilde{T} \leq t, \Delta =1)$ and $N_C(t)=\mathbbm{1}(\tilde{T} \leq t, \Delta =0)$ for $t \in \Tc$; both are zero until either an event or censoring occurs. By convention, we let $N_T(0)=N_C(0)=0$. Further, let $Y(t) = \one(N_{T}(t)=1 \cap N_{T}(t\minus 1)=0)$ be the indicator for an event occuring exactly at time $t$; thus, for an individual with $\tilde{T}=\tau$ and $\Delta=1$, $Y(t) = 0$ for all $t\neq \tau$, and $Y(t)=1$ at the event time $t=\tau$. The conditional hazard is the probability that an event occurs \textit{at} time $\tau$ given that it does not occur before time $\tau$, hence it can be defined as \cite{cai2019targeted}
\begin{equation}\label{eq:haz}
    \begin{split}
      \lambda(\tau|a,x) &= \PR(\tilde{T}=\tau, \Delta = 1|\tilde{T} \geq \tau, A=a, X=x) \\
      &= \PR(Y(\tau)=1| N_T(\tau \minus 1)=_C(\tau \minus 1)=0, A=a, X=x) 
      \end{split}
\end{equation}
It is easy to see from (\ref{eq:haz}) that given data in long format, $\lambda(\tau|a,x)$ can be estimated for any $\tau$ by solving a standard classification problem with $Y(\tau)$ as target variable, considering only the samples \textit{at-risk} at time $\tau$ in each treatment arm (individuals for which neither event nor censoring has occurred until that time point; i.e. the set $\mathcal{I}(\tau, a) \defeq \{i \in [n]: N_{T}(\tau \minus 1)_{i} = N_{C}(\tau \minus 1)_{i} =0 \cap  A_i=a\}$). Finally, given the hazard, the associated survival function $S(\tau|a,x) = \PR(T>\tau |A=a, X=x)$ can then be computed as $S(\tau|a,x)=\prod_{t \leq \tau}\big(1 - \lambda(t|a, x)\big)$. The censoring hazard $\lambda_{C}(t|a,x)$ and survival function $S_C(t|a, x)$ can be defined analogously. 

\textbf{Target parameters.} 
While the main interest in the standard treatment effect estimation setup with continuous outcomes usually lies in estimating only the (difference between) conditional outcome means under different treatments, there is a broader range of target parameters of interest in the time-to-event context, including both treatment-specific target functions and \textit{contrasts} that represent some form of heterogeneous treatment effect (HTE). 
We define the treatment-specific (conditional) hazard and survival functions as 
\begin{equation}
    \begin{split}
    \lambda^{a}(\tau|x) &= \PR(T=\tau | T \geq \tau, do(A=a, C \geq \tau), X=x)\\
    S^{a}(\tau|x) &= \PR(T>\tau| do(A=a, C \geq \tau), X=x) = \prod\nolimits_{t \leq \tau}\big(1 - \lambda^{a}(t| x)\big)
    \end{split}
\end{equation}
Here, $do(\cdot)$ denotes \cite{Pearl:09}'s do-operator which indicates an intervention; in our context, $do(A=a, C \geq \tau)$ ensures that every individual is assigned treatment $a$ \textit{and} is observed at (not censored before) the time-step of interest \cite{stitelman2010collaborative}. Below we discuss assumptions that are necessary to identify such interventional quantities from observational datasets in the presence of censoring. \\
Given $\lambda^{a}(\tau|x)$ and $S^{a}(\tau|x)$, possible HTEs of interest\footnote{\textit{Note:} All parameters of interest to us are \textit{heterogeneous} (also sometimes referred to as \textit{individualized}), i.e. a function of the covariates $X$, while the majority of existing literature in (bio)statistics considers \textit{population average} parameters that are functions of quantities such as $\PR(T>\tau| do(A=a))$, which average over all $X$.} include the difference in treatment-specific survival times at time $\tau$, i.e.
 $\text{HTE}_{surv}(\tau|x) = S^1(\tau|x) - S^0(\tau|x)$, the difference in restricted mean survival time (RMST) up to time $L$, i.e. $\text{HTE}_{rmst}(x)  = \sum_{t_{k} \leq L} \big( S^1(t_{k}|x) - S^0(t_{k}|x) \big) \cdot (t_{k}- t_{k-1})$,  and hazard ratios. In the following, we will focus on estimation of the treatment specific hazard functions $\{\lambda^{a}(t|x)\}_{a\in \{0,1\}, t\in \mathcal{T}}$ as this can be used to compute survival functions and causal contrasts. 

\begin{wrapfigure}{r}{0.52\textwidth}
\vskip -0.22in
    \centering
    \includegraphics[width=0.5\textwidth]{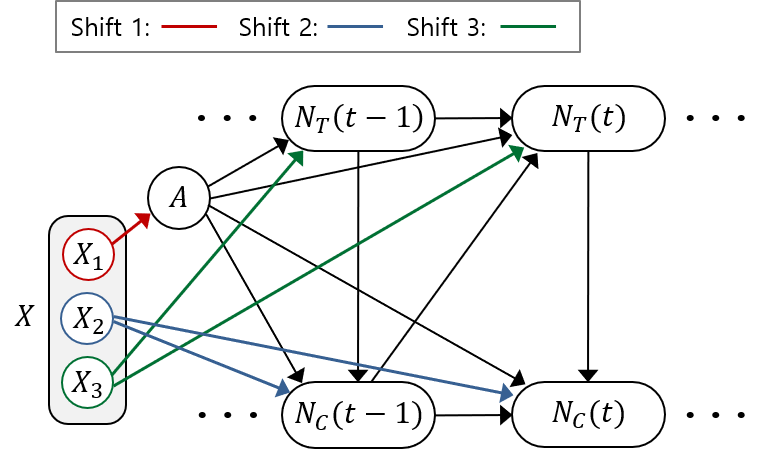}
    \caption{The assumed underlying DAG. Covariates $X$ can be split into (possibly overlapping) subsets $X_1$, $X_2$ and $X_3$, determining treatment selection, informative censoring, and event times, respectively. } \vspace{-5mm}
 \label{fig:DAG}
\end{wrapfigure}
\textbf{Assumptions.} (\textit{1. Identification}) To identify interventional quantities from observational data, it is necessary to make a number of \textit{untestable} assumptions on the underlying data-generating process (DGP) \cite{Pearl:09} -- this generally limits the ability to make causal claims to settings where sufficient domain knowledge is available. Here, as \cite{stitelman2010collaborative, stitelman2011targeted}, we assume the data was generated from the fairly general directed acyclic graph (DAG) presented in Fig. \ref{fig:DAG}. As there are no arrows originating in hidden nodes entering treatment or censoring nodes, this graph formalizes \textit{(1.a) The `No Hidden Confounders' Assumption} in static treatment effect estimation and \textit{(1.b) The `Censoring At Random' Assumption} in survival analysis \cite{stitelman2010collaborative}. The latter is necessary here as estimating an effect of treatment on event time implicitly requires that censoring can be `switched off' -- i.e. intervened on. This graph implicitly also formalizes \textit{(1.c) The Consistency Assumption}, i.e. that observed outcomes are `potential' outcomes under the observed intervention, as each node in a DAG is \textit{defined} as a function of its ancestors and exogenous noise \cite{stitelman2010collaborative}. Under these assumptions, $\lambda^{a}(\tau|x) = \lambda(\tau|a,x)$.

(\textit{2. Estimation}) To enable nonparametric estimation of $\lambda^{a}(\tau|x)$ for some $\tau \in \Tc$, we additionally need to assume that the interventions of interest are observed with non-zero probability; within different literatures these assumptions are known under the label of `overlap' or `positivity' \cite{Shalit:16, van2011targeted}. In particular, for $0 < \epsilon_1, \epsilon_2,  < 1$ we need that \textit{(2.a)}  $\epsilon_1 < \PR(A=a|X=x)<1-\epsilon_1$, i.e. treatment assignment is non-deterministic, and that \textit{(2.b)} $\PR(N_C(t)=0|A=a, X=x)>\epsilon_2$ for all $t < \tau$, i.e. no individual will be deterministically censored before $\tau$. Finally, because $\lambda^{a}(\tau|x)$ is a probability defined \textit{conditional} on survival up to time $\tau$, we need to assume that \textit{(2.c)} $\mathbb{P}(N_T(\tau \minus 1)=0|A=a, X=x)>\epsilon_3>0$ for it to be well-defined. We formally state and discuss all assumptions in more detail in Appendix C.

\section{Challenges in Learning Treatment-Specific Hazard Functions using ERM} 
\paragraph{Preliminaries: ERM under Covariate Shift.} Recall that in problems with covariate shift, the training distribution $X, Y \sim  \mathbb{Q}_0(\cdot)$ used for ERM and target distribution $X, Y \sim  \mathbb{Q}_1(\cdot)$ are mismatched: One assumes that the marginals do not match, i.e. $\mathbb{Q}_0(X) \neq \mathbb{Q}_1(X)$, while the conditionals remain the same, i.e. $\mathbb{Q}_0(Y|X)=\mathbb{Q}_1(Y|X)$ \cite{kouw2018introduction}. If the hypothesis class $\mathcal{H}$ used in ERM does not contain the truth (or in the presence of heavy regularization), this can lead to suboptimal hypothesis choice as $\arg\min_{h \in \mathcal{H}} \mathbb{E}_{X, Y \sim \mathbb{Q}_1(\cdot)}[\ell(Y, h(X))] \neq \arg \min_{h \in \mathcal{H}} \mathbb{E}_{X, Y \sim \mathbb{Q}_0(\cdot)}[\ell(Y, h(X))]$ in general.

\subsection{Sources of Covariate Shift in Learning Treatment-Specific Hazard Functions}
We now consider how to learn a treatment-specific hazard function $\lambda^{a}(\tau|x)$ from observational data using ERM. As detailed in Section \ref{sec:problem_definition},  we exploit the long data format by realizing that $\lambda^{a}(\tau|x)$ can be estimated by solving a standard classification problem with $Y(\tau)$ as dependent variable and $X$ as covariates, using only the samples at risk with treatment status $a$, i.e. $\mathcal{I}(\tau, a)$, which corresponds to solving the empirical analogue of the problem 
\begin{equation}
    \hat{\lambda}^{a}(\tau|x)  \in  \arg \min_{h_{a, \tau} \in \mathcal{H}} \mathbb{E}_{X, Y(\tau) \sim \mathbb{P}_{a, \tau}(\cdot)}[\ell(Y(\tau), h_{a, \tau}(X)]
\end{equation}
where we use $\mathbb{P}_{a, \tau}$ to refer to the observational (at-risk) distribution $\mathbb{P}_{a, \tau}(X, Y(\tau)) = {\lambda}^{a}_T(\tau|X)\mathbb{P}_{a,\tau}(X)$ with $\mathbb{P}_{a,\tau}(X) = \mathbb{P}(X|N_T(\tau\minus1)=N_C(\tau\minus1)= 0, A = a) =  \mathbb{P}(X|\tilde{T}\geq \tau, A= a)$. If the loss function $\ell$ is chosen to be the log-loss, this corresponds to optimizing the likelihood of the hazard. 

The observational (at-risk) covariate distribution $\mathbb{P}_{a,\tau}(X)$, however, is \textit{not} our target distribution:  instead, to obtain reliable hazard estimates for the whole population, we wish to optimize the fit over the population at baseline, i.e. the marginal distribution $X \sim \mathbb{P}(X)$ which we will refer to as  $\mathbb{P}_0(X)$ below to emphasize it being the baseline at-risk distribution\footnote{With slight abuse of notation, we will use $\mathbb{P}_0$ and $\mathbb{P}_{a,\tau}$ also to refer to densities of continuous $x$}. Here, differences between $\mathbb{P}_0(X)$ and the population at-risk $\mathbb{P}_{a,\tau}(X)$ can arise due to three distinct sources of covariate shift:
\begin{itemize}[leftmargin=5.5mm]
    \item \textit{(Shift 1) Confounding/treatment selection bias}: if treatment is not assigned completely at random, then $\mathbb{P}(X|A=a)\neq \mathbb{P}_0(X)$ and the distribution of characteristics across the treatment arms differs already at baseline, thus $\mathbb{P}_{a,\tau}(X)\neq \mathbb{P}_0(X)$ in general.
    \item \textit{(Shift 2) Censoring bias}: regardless of the presence of confounding, if the censoring hazard is not independent of covariates, i.e. $\lambda_C(\tau|a,x) \neq \lambda_C(\tau|a)$, then the population at-risk changes over time such that $\mathbb{P}_{a, \tau_1}(X) \neq \mathbb{P}_{a, \tau_2}(X) \neq\mathbb{P}_0(X)$ in general. If, in addition, there are differences between the treatment-specific censoring hazards, then the at-risk distribution will also differ across treatment arms at any given time-point, i.e. $\mathbb{P}_{a,\tau}(X) \neq \mathbb{P}_{1-a, \tau}(X)$ for $\tau>1$ in general. 
    \item \textit{(Shift 3) Event-induced shifts}: Counterintuitively, even in the absence of both confounding and censoring, there will be covariate shift in the at-risk population if the event-hazard depends on covariates, i.e. if $\lambda(\tau|a,x) \neq \lambda(\tau|a)$ then $\mathbb{P}_{a, \tau_1}(X) \neq \mathbb{P}_{a,\tau_2}(X) \neq\mathbb{P}_0(X)$ in general. Further, if there are heterogenous treatment effects, then $\mathbb{P}_{a,\tau}(X) \neq \mathbb{P}_{1\minus a, \tau}(X)$ for $\tau>1$ in general.
\end{itemize}

\textbf{What makes the survival treatment effect estimation problem unique?} While \textit{Shift 1} arises also in the standard treatment effect estimation setting, \textit{Shift 2} and \textit{Shift 3} arise uniquely due to the nature of time-to-event data\footnote{Interestingly, changes of the at-risk population over time arise also in standard survival problems (without treatments); yet in the context of \textit{prediction} these do not matter: as the at-risk population at any time-step is also the population that will be encountered at test-time, this shift in population over time is not problematic, unless it is caused by censoring. If, however, our goal is \textit{estimation} of the best target parameter (here: the hazard at a specific point in time $\tau$) over the whole population, this corresponds to a setting where the ideal evaluation is performed on a population different from the observed one  -- and hence requires careful consideration of the consequences of the covariate shifts discussed above.}. Thus, estimating treatment effects from time-to-event data is inherently more involved than estimating treatment effects in the standard static setup, as covariate shift at time horizon $\tau>1$ can arise \textit{even in a randomized control trial (RCT)}. Thus, in addition to the overall at-risk population changing over time, both treatment effect heterogeneity and treatment-dependent censoring can lead to differences in the composition of the population at-risk in each treatment arm. Further, Shifts 1, 2 and 3 can also interact to create more extreme shifts; e.g. if treatment selection is based on the same covariates as the event process (i.e. $X_1=X_3$ in Fig. \ref{fig:DAG}) then event-induced shift can amplify the selection effect over time (refer to Appendix E for a synthetic example of this).

\subsection{Possible Remedies and Theoretical Analysis}
A natural solution to tackle bias in ERM caused by covariate shift is to use importance weighting \cite{shimodaira2000improving}; i.e. to reweight the empirical risk by the density ratio of target $\mathbb{P}_0(X)$ and observed distribution $\mathbb{P}_{a,\tau}(X)$. If we wanted to obtain a hazard estimator for $(\tau, a)$, optimized towards the marginal population,  optimal importance weights are given by
$w^*_{a, \tau}(x) = \frac{\mathbb{P}_0(x)}{\mathbb{P}_{a,\tau}(x)} = \frac{p_{\tau, a}}{e_a(x)r_a(x, \tau)}$ with  $p_{\tau, a}=\mathbb{P}(\tilde{T}\geq \tau, A=a)$, $e_a(x)=\mathbb{P}(A=a|X=x)$ the propensity score, and $r^{a}(x, \tau) = \PR(\tilde{T} \geq \tau | A=a, X=x)$ the probability to be at risk, i.e. the probability that neither event nor censoring occurred before time $\tau$. These weights are well-defined due to the overlap assumptions detailed in Sec. \ref{sec:problem_definition}; however, they are in general unknown as they \textit{depend on the unknown target parameters} $\lambda^{a}(\tau|x)$ through $r^{a}(x, \tau)$. Further, especially for large $\tau$, these weights might be very extreme even if known, which can lead to highly unstable results \cite{cortes2010learning} -- making biased yet stabilized weighting schemes, e.g. truncation, a good alternative. Therefore, we only assume access to some (possibly imperfect) weights $w_{a, \tau}(x)$ s.t. $\mathbb{E}_{X \sim \mathbb{P}_{a, \tau}}[w_{a, \tau}(x)]=1$, so that we can create a weighted distribution $\mathbb{P}^w_{a, \tau}=w_{a, \tau}(x)\mathbb{P}^a_\tau(x)$. (Note: $\mathbb{P}^a_\tau(x)$ can be recovered by using $w_{a, \tau}(x)=1$.)

Either instead of \cite{johansson2016learning, Shalit:16} or in addition to weighting \cite{johansson2018learning, hassanpour2019counterfactual, assaad2021counterfactual, johansson2020generalization}, the literature on learning balanced representations for static treatment effect estimation has focused on finding a different remedy for distributional differences between treatment arms: creating representations $\Phi: \mathcal{X} \rightarrow \mathcal{R}$ which have similar (weighted) distributions across arms as measured by an integral probability metric (IPM), motivated by generalization bounds. As we show below, we can exploit a similar feature in our context by finding a representation that minimizes the IPM term not between treatment arms, but between covariate distribution at baseline $\mathbb{P}_0$ and $\mathbb{P}^w_{a, \tau}$. The proposition below bounds the target risk of a hazard estimator $\textstyle{\hat{\lambda}^a(\tau|x)=h(\Phi(x))}$ relying on any representation. The proof, which relies on the concept of excess target information loss, proposed recently to analyze domain-adversarial training \cite{johansson2019support}, and the standard IPM arguments made in e.g. \cite{johansson2020generalization}, is stated in Appendix C.

\begin{proposition} For fixed $a, \tau$ and representation $\Phi: \mathcal{X} \rightarrow \mathcal{R}$, let $\mathbb{P}_0^{\Phi}$, $\mathbb{P}^{\Phi}_{a, \tau}$ and $\mathbb{P}^{w, \Phi}_{a, \tau}$ denote the target, observational, and weighted observational distribution of the representation $\Phi$. Define the pointwise losses
\begin{equation}
\begin{split}
    \ell_{h, \mathbb{Q}}(x; a, \tau) \defeq \mathbb{E}_{Y(\tau)|x, a \sim \mathbb{Q}}[\ell(Y(\tau), h(\Phi(X)))|X=x, A=a] \\
    \ell_{h, \mathbb{Q}^\Phi}(\phi; a, \tau) \defeq \mathbb{E}_{Y(\tau)|\phi, a \sim \mathbb{Q}^\Phi}[\ell(Y(\tau), h(\Phi))|\Phi=\phi, A=a]
\end{split}
\end{equation}
of (hazard) hypothesis $h \equiv h_{a, \tau}: \mathcal{R} \rightarrow [0,1]$ w.r.t. distributions in covariate and representation space, respectively. Assume there exists a constant $C_\Phi>0$ s.t. ${C_\Phi}^{-1} \ell_{h,\mathbb{P}^{w, \Phi}_{a, \tau}}(\phi, a, \tau)
 \in \mathcal{G}$ for some family of functions $\mathcal{G}$. Then we have that 
\begin{equation}\label{eq:mainbound}
   \underbrace{\mathbb{E}_{X \sim \mathbb{P}_0}[\ell_{h, \mathbb{P}}(X; a, \tau)]}_{\text{Target Risk}} \leq  \underbrace{\mathbb{E}_{X \sim \mathbb{P}_{a, \tau}}[w_{a, \tau}(X)\ell_{h, \mathbb{P}}(X; a, \tau)]}_{\text{Weighted observational risk}} + C_\Phi \underbrace{\IPM_G(\mathbb{P}_0^{\Phi}, \mathbb{P}^{w, \Phi}_{a, \tau})}_{\text{Distance in } \Phi \text{-space}} + \underbrace{\eta^l_\Phi(h)}_{\text{Info loss}}
\end{equation}
where $\IPM_\mathcal{G}(\mathbb{P}, \mathbb{Q}) = \sup_{g\in \mathcal{G}}\left|\int g(x) (\mathbb{P}(x) - \mathbb{Q}(x))dx\right| $ and we define the excess target information loss $\eta^{\ell}_\Phi(h)$ analogously to \cite{johansson2019support} as 
    $\eta^{\ell}_\Phi(h) \defeq \mathbb{E}_{X\sim \mathbb{P}}[\xi_{\mathbb{P}_0^\Phi, \mathbb{P}}(X) - \xi_{\mathbb{P}^{w, \Phi}_{a, \tau}, \mathbb{P}}(X)]$  with $\xi_{ \mathbb{Q}^\Phi, \mathbb{Q}}(x) \defeq  \ell_{h, \mathbb{Q}^\Phi}(\phi; a, \tau) - \ell_{h, \mathbb{Q}}(x; a, \tau)$. For invertible $\Phi$, $\eta^{\ell}_\Phi(h)=\xi_{\mathbb{Q}^\Phi, \mathbb{Q}}(x)=0$.
\end{proposition}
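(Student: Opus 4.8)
The plan is to transfer both the target risk and the weighted observational risk from covariate space $\mathcal{X}$ into representation space $\mathcal{R}$, where the IPM becomes applicable, and to collect the mismatch between the two induced conditionals into the information-loss term. The single fact that drives everything is that expected loss is invariant under the pushforward by $\Phi$ \emph{provided one uses the induced conditional}: for any covariate distribution $\mathbb{Q}(X)$ carrying the true conditional $\mathbb{P}(Y(\tau)|X,a)$, since $\ell(Y(\tau),h(\Phi(X)))$ depends on $X$ only through $\phi=\Phi(X)$, integrating out the fibre $\{x:\Phi(x)=\phi\}$ against $\mathbb{Q}$ gives
\[
\mathbb{E}_{X\sim\mathbb{Q}}[\ell_{h,\mathbb{P}}(X;a,\tau)] = \mathbb{E}_{\phi\sim\mathbb{Q}^\Phi}[\ell_{h,\mathbb{Q}^\Phi}(\phi;a,\tau)],
\]
where $\mathbb{Q}^\Phi(Y(\tau)|\phi)$ is the $\mathbb{Q}$-weighted fibre average of $\mathbb{P}(Y(\tau)|X,a)$. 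First I would apply this identity to $\mathbb{Q}=\mathbb{P}_0$ and to $\mathbb{Q}=\mathbb{P}^w_{a,\tau}$ (the weighted observational risk equals $\mathbb{E}_{X\sim\mathbb{P}^w_{a,\tau}}[\ell_{h,\mathbb{P}}(X;a,\tau)]$ because $\mathbb{P}^w_{a,\tau}=w_{a,\tau}\mathbb{P}_{a,\tau}$ with $\mathbb{E}_{\mathbb{P}_{a,\tau}}[w_{a,\tau}]=1$), rewriting the target and weighted risks as $\mathbb{E}_{\phi\sim\mathbb{P}_0^\Phi}[\ell_{h,\mathbb{P}_0^\Phi}(\phi)]$ and $\mathbb{E}_{\phi\sim\mathbb{P}^{w,\Phi}_{a,\tau}}[\ell_{h,\mathbb{P}^{w,\Phi}_{a,\tau}}(\phi)]$ respectively.

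Next I would subtract and add the cross term $\mathbb{E}_{\phi\sim\mathbb{P}_0^\Phi}[\ell_{h,\mathbb{P}^{w,\Phi}_{a,\tau}}(\phi)]$, which pairs the \emph{target} marginal $\mathbb{P}_0^\Phi$ with the \emph{weighted-observational} induced conditional. This splits the gap between target and weighted risk into: (i) a term $\int \ell_{h,\mathbb{P}^{w,\Phi}_{a,\tau}}(\phi)\,(\mathbb{P}_0^\Phi-\mathbb{P}^{w,\Phi}_{a,\tau})(d\phi)$ in which one and the same integrand is tested against two measures, and (ii) a residual $\mathbb{E}_{\phi\sim\mathbb{P}_0^\Phi}[\ell_{h,\mathbb{P}_0^\Phi}(\phi)-\ell_{h,\mathbb{P}^{w,\Phi}_{a,\tau}}(\phi)]$ that compares the two induced conditionals on a common marginal. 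For (i), the assumption $C_\Phi^{-1}\ell_{h,\mathbb{P}^{w,\Phi}_{a,\tau}}\in\mathcal{G}$ lets me factor out $C_\Phi$ and bound the signed integral by $C_\Phi\,\IPM_\mathcal{G}(\mathbb{P}_0^\Phi,\mathbb{P}^{w,\Phi}_{a,\tau})$ straight from the definition of the IPM as a supremum over $\mathcal{G}$.

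It then remains to identify the residual (ii) with $\eta^\ell_\Phi(h)$. Expanding the definition of $\xi$, the true-conditional contributions $\ell_{h,\mathbb{P}}(x)$ cancel in $\xi_{\mathbb{P}_0^\Phi,\mathbb{P}}(x)-\xi_{\mathbb{P}^{w,\Phi}_{a,\tau},\mathbb{P}}(x)=\ell_{h,\mathbb{P}_0^\Phi}(\phi)-\ell_{h,\mathbb{P}^{w,\Phi}_{a,\tau}}(\phi)$, so $\eta^\ell_\Phi(h)$ is the expectation of precisely the integrand of (ii), but taken against the pushforward of the marginal $\mathbb{P}$. The observation that makes this match exact is that the target distribution is \emph{defined} to be the baseline marginal, $\mathbb{P}_0(X)=\mathbb{P}(X)$, whence $\mathbb{P}_0^\Phi=\mathbb{P}^\Phi$ and (ii) $=\eta^\ell_\Phi(h)$. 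Keeping this bookkeeping straight — tracking which distribution induces the conditional versus which supplies the integrating marginal, and recognising that these coincide exactly because $\mathbb{P}_0=\mathbb{P}$ — is the main subtlety; everything else is algebra. Finally, when $\Phi$ is invertible each fibre is a single point, so every induced conditional collapses to the true $\mathbb{P}(Y(\tau)|X,a)$, giving $\xi\equiv0$ pointwise and hence $\eta^\ell_\Phi(h)=0$.
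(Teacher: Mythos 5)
Your proposal is correct and follows essentially the same route as the paper's proof in Appendix C: the pushforward-invariance identity is the paper's Lemma C.1, the add-and-subtract of the cross term $\mathbb{E}_{\phi\sim\mathbb{P}_0^\Phi}[\ell_{h,\mathbb{P}^{w,\Phi}_{a,\tau}}(\phi)]$ and identification of the residual with $\eta^\ell_\Phi(h)$ is Lemma C.2, and the IPM bound on the marginal-difference term is the final step. The only point you leave slightly implicit -- that the weighted observational distribution carries the true conditional $\mathbb{P}(Y(\tau)\mid X, A=a)$, which the paper justifies via the unconfoundedness and censoring-at-random assumptions -- is handled correctly by your phrasing ``carrying the true conditional,'' so no gap remains.
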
 

Unlike the bounds provided in \cite{Shalit:16, johansson2018learning, johansson2020generalization, assaad2021counterfactual, chapfuwa2021enabling}, this bound does not rely on representations to be invertible; we consider this feature important as none of the works listed actually enforced invertibility in their proposed algorithms. Given bound (\ref{eq:mainbound}), it is easy to see why non-invertibilty can be useful: for any (possibly non-invertible) representation for which it holds that $Y(\tau) \indep X | \Phi(X), A$, it also holds that $\eta^\ell_\Phi(h)=\xi_{\mathbb{P}^\Phi, \mathbb{P}}(x)=\xi_{\mathbb{P}^{w,\Phi}_{a, \tau}, \mathbb{P}}(x)=0$ and the causally identifying restrictions continue to hold. A simple representation for which this property holds is a selection mechanism that chooses only the causal parents of $Y(\tau)$ from within $X$; if $X$ can be partitioned into variables affecting the instantaneous risk ($X_3$ in Fig. \ref{fig:DAG}), and variables affecting \textit{only} treatment assignment ($X_1 \setminus X_3$) and/or censoring mechanism  ($X_2 \setminus X_3$), then the IPM term can be reduced by a representation which drops the latter two sets of variables --  or irrelevant variables correlated with any such variables -- without affecting $\eta^\ell_\Phi(h)$. As a consequence, event-induced covariate shift can generally not be \textit{fully} corrected for using non-invertible representations (unless the variables affecting event time are different at every time-step). Further, given perfect importance weights $w^*$, both $\eta^{\ell}_\Phi(h)$ and IPM term are zero.

Except for the dependence on $\eta^\ell_\Phi(h)$, this bound differs from the regression-based bound for survival treatment effects stated in \cite{chapfuwa2021enabling} (which is identical to the original treatment effect bound in \cite{Shalit:16}) in that we have dependence on $\tau$ in the IPM term, which, among other things, explicitly captures the effect of censoring. Our bound motivates that, instead of finding representations that balance treatment- and control group at baseline (or at each time step) we should find representations that balance $\mathbb{P}^\Phi_{a, \tau}$ towards the \textit{baseline distribution} $\mathbb{P}_0^\Phi$ for each time step, which motivates our method detailed below. If, instead, we would apply the IPM-term to encourage only the arm-specific at-risk distributions at each time-step  to be similar, this would correct only for shifts due to (i) confounding at baseline, (ii) treatment-induced differences in censoring and (iii) treatment-induced differences in events. It will, however, not allow handling the event- and censoring-induced shifts that occur regardless of treatment status.  Note that this bound therefore also motivates the use of balanced representations for modeling time-to-event outcomes in the presence of informative censoring even in the standard prediction setting, which is a finding that could be of independent interest for the ML survival analysis literature.

\subsection{From hazards to survival functions}
If the ultimate goal is to use the hazard function to estimate survival functions as $ \hat{S}^{a}(\tau|x) = \prod\nolimits_{t \leq \tau}\big(1 - \hat{\lambda}^{a}(t| x)\big)$, the best target population to consider during hazard estimation may not be the marginal distribution. Instead, the optimal target population may depend on the metric by which we wish to evaluate the resulting survival function. If we wanted to find the survival function that maximises the complete data likelihood (corresponding to the hypothetical setting in which we intervened to set $A=a$ and $C \geq \tau$), the target population (at each time step $t$) would be $\mathbb{P}(X|T \geq t, do(A=a, C \geq t))$ -- the population that preserves event-induced shift but removes selection- and censoring-induced shifts. If, instead, we focused on the MSE of estimating the survival function (as in our experiments), it becomes more difficult to derive an exact target population for estimating the hazards. If we assume access to a perfect estimate of the survival function for the first $\tau\minus 1$ time steps (i.e. $\hat{S}(\tau\minus 1|x)=S(\tau\minus 1|x)$) and focus only on estimating the next hazard, ${\lambda}^a(\tau|X)$, we can write 
\begin{equation*}
\begin{split} 
\mathbb{E}_{X \! \sim \mathbb{P}_0}[(S^a\!(\tau|X)\!-\!\hat{S}^a\!(\tau|X))^2] &= \mathbb{E}_{X \!\sim \mathbb{P}_0}[(S^a\!(\tau \minus 1|X)(1\!-\!\lambda^a\!(\tau|X))\!-\!\hat{S}^a\!(\tau \minus 1|X)(1\!-\!\hat{\lambda}^a\!(\tau|X)))^2]\\ & =\mathbb{E}_{X \! \sim \mathbb{P}_0}[S^a\!(\tau\minus 1|X)^2(\hat{\lambda}^a\!(\tau|X)\!-\!{\lambda}^a\!(\tau|X))^2]
\end{split}
\end{equation*}
and notice that the MSE will implicitly down-weigh individuals with lower survival probability\footnote{Due to the square in the term $S^a(\tau-1|X)^2$, this will be even more extreme than exact up-weighting of the population $\mathbb{P}(X|T \geq \tau, do(A=a, C \geq \tau))$.}. 

Defining an exact target population for the hazard when the goal is to also estimate the survival function well is thus not straightforward, making exact importance weighting difficult. Additionally, unlike the marginal population, interventional populations which change over time, such as  $\mathbb{P}(X|T \geq t, do(A=a, C \geq t))$, are never observed in practice and hence cannot be used to perform balancing regularization of a representation using empirical IPM-terms. Therefore,  we refrain from using importance weighting in our method (which is described in the next section), and resort to using the marginal population for balancing regularization of the representation throughout. Intuitively, as outlined in the previous section, we expect that doing so will \textit{not} over-correct for event-induced shifts that are predictive of outcome (and should hence be preserved) as such ``over-balancing" would reduce the predictive power of the representation, which would immediately be penalized by the presence of the expected loss component in the bound\footnote{In practice, we ensure this by weighting the contribution of the IPM term by a hyperparameter that is chosen to preserve predictive performance of the representation (see Appendix D).}. Additionally, we expect that using the marginal population for balancing could be useful also for estimating the survival function even in the absence of selection- and censoring-induced shifts, as it may help to remove the effect of variables that appear spuriously correlated with outcome over time.

\section{\proposed: Estimating  HTEs from Time-to-Event Data} 

\begin{figure}[!t]
    \centering
    \includegraphics[width=0.6\textwidth]{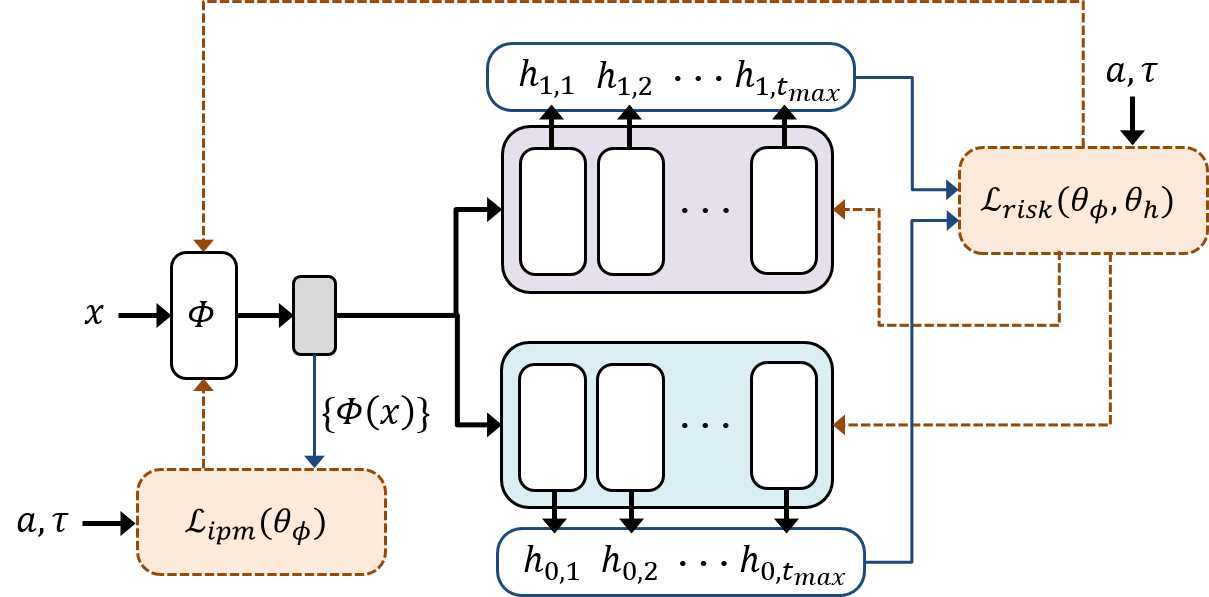}
    \caption{\proposed~architecture.} \vspace{-3mm}
    \label{fig:network_architecture}
\end{figure}
Based on the theoretical analysis above, we propose a novel deep learning approach to HTE estimation from observed time-to-event data, which we refer to as \proposed~{(Individualized Treatment Effect estimator for Survival analysis)}.\footnote{The source code for \proposed~is available in {\url{https://github.com/chl8856/survITE}}.} The network architecture is illustrated in Figure \ref{fig:network_architecture}. Note that even in the absence of treatments we can use this architecture for estimation of hazards and survival functions by using only one treatment $a=0$. As we show in the experiments, this version of our method -- SurvIHE (Individualized Hazard Estimator for Survival analysis) -- is of independent interest in the standard survival setting, as it tackles Shifts 2 \& 3. Below, we describe the empirical loss functions we use to find representation $\Phi$ and hypotheses $h_{a, \tau}$.

Let $\Phi:\Xc \rightarrow \mathcal{R}$ denote the \textit{representation} (parameterized by $\theta_{\phi}$) and $h_{a, \tau}: \mathcal{R} \rightarrow [0,1]$ the \textit{hazard estimator} for treatment $a$ and time $\tau$ (parameterized by $\theta_{h_{a, \tau}}$), each implemented as a fully-connected neural network. While the output heads are thus unique to each treatment-group time-step combination, we allow hazard estimators to share information by using \textit{one} shared representation for all hazard functions. This allows for both borrowing of information across different $a, \tau$ and significantly reduces the number of parameters of the network. Then, given the time-to-event data $\data$, we use the following empirical loss functions for the observational risk and the IPM term: 
\begin{equation} \nonumber
\begin{split}
    \loss_{risk}(\theta_{\phi}, \theta_{h}) &= \frac{1}{\tmax}\sum_{t=1}^{\tmax}\!\sum_{i: \tilde{\tau}_{i} \geq t} n^{-1}_{1, t} a_{i}\ell\big(y_{i}(t), {h}_{1,t}({\Phi}(x_{i}) ) \big) + n^{-1}_{0, t}(1\minus a_{i})\ell\big(y_{i}(t), {h}_{0,t}({\Phi}(x_{i}) ) \big), \\
    \loss_{ipm}(\theta_{\phi}) &= \sum_{a\in \{0,1\}} \sum_{t=1}^{\tmax} Wass\big( \{ {\Phi}(x_{i}) \}_{i=1}^{n}, \{  {\Phi}(x_{i}) \}_{i:\tilde{\tau}_{i} \geq t, a_{i}=a} \big),
\end{split}
\end{equation}
where $Wass(\cdot, \cdot)$ is the finite-sample Wasserstein distance \cite{cuturi2014wass}; refer to Appendix D for further detail. Note that $\loss_{ipm}(\theta_{\phi})$, which penalizes the discrepancy between the baseline distribution and \textit{each} at-risk distribution $\mathbb{P}^{\Phi}_{a, \tau}$, simultaneously tackles all three sources of shifts. Further, $n_{a, t}=|\mathcal{I}(\tau, a)|$ is the number of samples at-risk in each treatment arm; its presence ensures that each $a, \tau$-combination contributes equally to the loss. Overall, we can find ${\Phi}$ and ${h}_{a,\tau}$'s that optimally trade off balance and predictive power as suggested by the generalization bound \eqref{eq:mainbound} by minimizing the following loss:
\begin{equation} \label{eq:loss_target}
    \loss_{target}(\theta_{\phi}, \theta_{h}) = \loss_{risk}(\theta_{\phi}, \theta_{h}) + \beta \loss_{ipm}(\theta_{\phi})
\end{equation}
where $\theta_{h} = \{\theta_{h_{a, \tau}}\}_{a\in\{0,1\}, \tau \in \Tc} $, and $\beta > 0$ is a hyper-parameter.  
The pseudo-code of \proposed, the details of how to obtain $Wass(\cdot, \cdot)$ and how we set $\beta$ can be found in Appendix D.

\section{Related work}
\textbf{Heterogeneous treatment effect estimation (non-survival)} has been studied in great detail in the recent ML literature. While early work built mainly on tree-based methods \cite{hill2011bayesian, athey2016recursive, wager2018estimation, athey2019generalized}, many other methods, such as Gaussian processes \cite{alaa2017bayesian, alaa2018limits} and GANS \cite{yoon2018ganite}, have been adapted to estimate HTEs. Arguably the largest stream of work \cite{johansson2016learning, Shalit:16, johansson2018learning, shi2019adapting, hassanpour2019counterfactual, hassanpour2020learning, assaad2021counterfactual, curth2020} built on NNs, due to their flexibility and ease of manipulating loss functions, which allows for easy incorporation of balanced representation learning as proposed in \cite{johansson2016learning, Shalit:16} and motivated also the approach taken in this paper. Another popular approach has been to consider model-agnostic (or `meta-learner' \cite{kunzel2019metalearners}) strategies, which provide a `recipe' for estimating HTEs using \textit{any} predictive ML method \cite{kunzel2019metalearners, kennedy2020optimal, nie2021quasi, curth2020}. Because of their simplicity, the \textit{single model} (S-learner) -- which uses the treatment indicator as an additional covariate in otherwise standard model-fitting -- and \textit{two model} (T-learner) -- which splits the sample by treatment status and fit two separate models -- strategies \cite{kunzel2019metalearners}, can be directly applied to the survival setting by relying on a standard survival (prediction) method as base-learner. 

\textbf{ML methods for survival prediction} continue to multiply; here we focus on the most related class of methods -- namely on those nonparametrically modeling conditional hazard or survival functions  --  and \textit{not} on those relying on flexible implementations of the Cox proportional hazards model (e.g. \cite{faraggi1995neural, Katzman:16, Luck:17}) or modeling (log-)time as a regression problem (e.g. \cite{Hothorn:06, ranganath:16, chapfuwa2018adversarial, steingrimsson2019censoring, steingrimsson2020deep, avati2020countdown}). One popular nonparametric estimator of survival functions is \cite{Ishwaran:08}'s random survival forest, which relies on the Nelson-Aalen estimator to nonparametrically estimate the cumulative hazard within tree-leaves.  The idea of modeling discrete-time hazards directly using \textit{any arbitrary classifier} and long data-structures goes back to at least \cite{brown1975use}, with implementations using NN-based methods presented in e.g. \cite{biganzoli1998feed, gensheimer2019scalable, ren2017dsra, kvamme2019continuous}.  \cite{Changhee:AAAI18} models the probability mass function instead of the hazard, and \cite{yu2011learning} use labels $\mathbbm{1}\{T>t\}_{t \in \mathcal{T}}$ to estimate the survival function directly using multi-task logistic regression. For a more detailed overview of different strategies for estimating survival functions, refer to Appendix B. 

\textbf{Estimating HTEs from time-to-event data} has been studied in much less detail. \cite{ tabib2020non, zhang2017mining} use tree-based nearest-neighbor estimates to estimate expected differences in survival time directly, and \cite{henderson2020individualized} use a BART-based S-learner to output expected differences in log-survival time. \cite{hu2020estimating} performed a simulation study using different survival prediction models as base-learners for a two-model approach to estimating the difference in median survival time. Based on ideas from the semi-parametric efficiency literature, \cite{cui2020estimating}
and \cite{diaz2018targeted} propose estimators that target the (restricted) mean survival time \textit{directly} and consequently \textit{do not} output estimates of the treatment-specific hazard or survival functions. We consider the ability to output treatment-specific predictions an important feature of a model if the goal is to use model output to give decision support, given that it allows the decision-maker to trade-off relative improvement with the baseline risk of a patient. Finally, \cite{chapfuwa2021enabling} recently proposed a generative model for treatment-specific event times which relies on balancing representations to balance only the treatment groups at baseline. This model does not output hazard- or survival functions, but can provide approximations by performing Monte-Carlo sampling.

\section{Experiments}
Unfortunately, when the goal is \textit{estimating} (differences of) survival functions (instead of \textit{predicting} survival), evaluation on real data will not reflect performance w.r.t. the intended baseline population. 
Therefore, we conduct a range of synthetic experiments with \textit{known} ground truth. We evaluate the effects of different shifts separately by starting with survival estimation \textit{without} treatments, and then introduce treatments. Finally, we use the real-world dataset Twins \cite{louizos2017causal} which has uncensored survival outcomes for twins (where the treatment is `being born heavier'), and is hence free of Shifts 1 \& 2. 

\paragraph{Baselines.} 
Throughout, we use Cox regression (\textbf{Cox}), a model using a separate logistic regression to solve the hazard classification problem at each time-step (\textbf{LR-sep}), random survival forest (\textbf{RSF}), { and a deep learning-based time-to-event method \cite{Changhee:AAAI18} (\textbf{DeepHit})} as natural baselines; when there are treatments, we use them in a two-model (T-learner) approach. In settings with treatments, we additionally use the CSA-INFO model of \cite{chapfuwa2021enabling} (\textbf{CSA}), where we use its generative capabilities to approximate target quantities via monte-carlo sampling. Finally, we consider ablations of SurvITE (and SurvIHE); in addition to removing the IPM term (\textbf{SurvITE (no IPM)}), we consider two variants of SurvITE based on \cite{Shalit:16}'s CFRNet balancing term: \textbf{SurvITE (CFR-1)} creates a representation balancing treatment groups at baseline only, and \textbf{SurvITE (CFR-2)} creates a representation optimizing for balance of treatment groups \textit{at each time step} (i.e. no balancing towards $\mathbb{P}_0$). We discuss implementation in Appendix D.

\paragraph{Synthetic Experiments.}
We consider a range of synthetic simulation setups (S1-S4) to  highlight and isolate the effects of the different types of covariate shift. As event and censoring processes, we use
\begin{equation*}
    \lambda^a(t|x) =
        \begin{cases}
        0.1\sigma(-5x_{1}^{2} - a\cdot(\mathbbm{1}\{x_3\geq 0\}+0.5))&\!\text{for}~t \leq 10\\
        0.1\sigma(10x_{2}  - a\cdot(\mathbbm{1}\{x_3\geq 0\}+0.5)))&\!\text{for}~t > 10 
        \end{cases},~~~~~\lambda_{C}(t|x) =  0.01\sigma(10x_{4}^{2})
\end{equation*}
with treatment assignment mechanism  $a \sim \texttt{Bern}(\xi\cdot\sigma(\sum\nolimits_{p \in \mathcal{P}}x_{p}))$, with $\sigma$ the sigmoid function.   Additionally, we assume administrative censoring at $t=30$ throughout, i.e., $\lambda_{C}(30|x) = 1$, marking e.g. the end of a hypothetical clinical study. Covariates are generated from a 10-dimensional multivariate normal distribution with correlations, i.e. $X \sim \Norm(\textbf{0},\bm{\Sigma})$ where $\bm{\Sigma} = (1-\rho) \Iv + \rho \textbf{1}\textbf{1}^{\top}$ with $\rho = 0.2$. We use 5000 independently generated samples each for training and testing.

 \begin{figure*}[t]
    \centering
    \includegraphics[width=\textwidth]{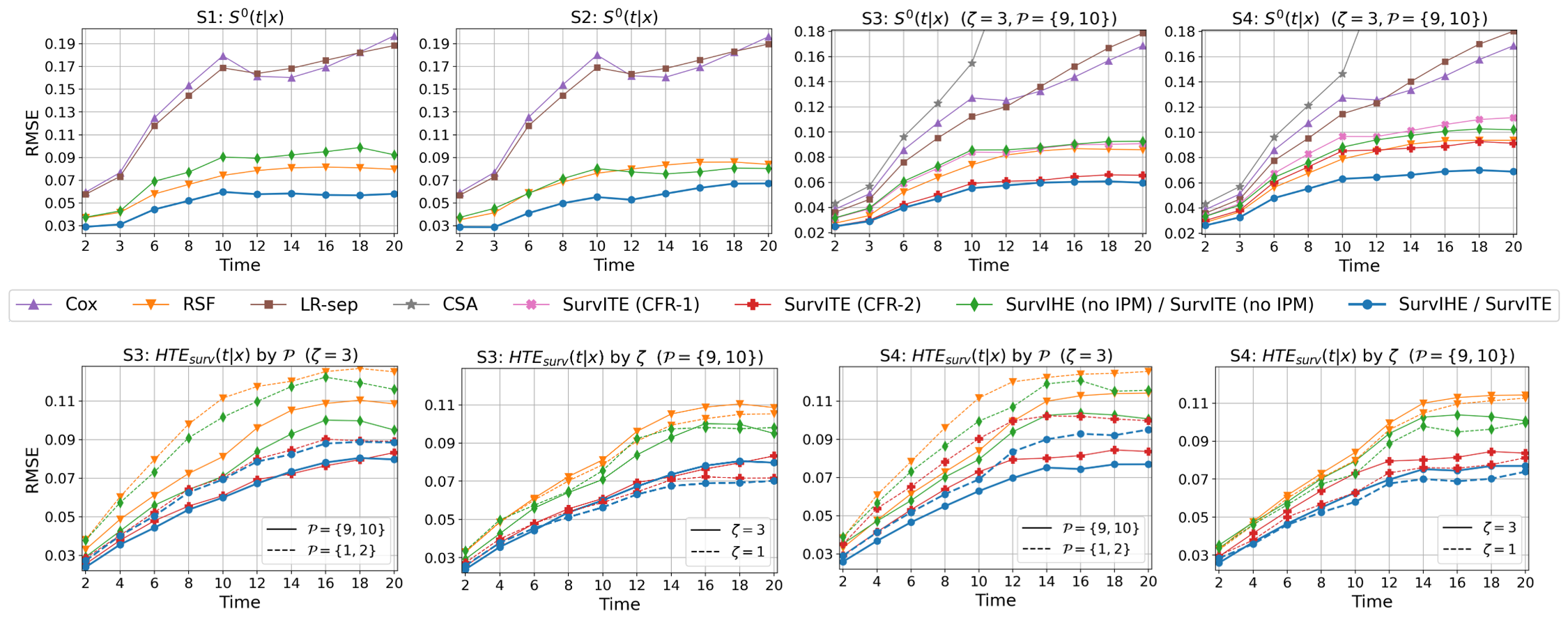}
        \vspace{-0.1in}
	\caption{RMSE of estimating the survival function $S^0(t|x)$ (top) and the treatment effect $HTE_{surv}(t|x)$ (bottom)  for different time steps across synthetic settings. Averaged across 5 runs. } \label{fig:survivalfunc} 
          \vspace{-0.1in}  
\end{figure*}

In S1, we begin with the simplest case -- \textit{no} treatments and \textit{no} censoring -- using only $\lambda^0(t|x)$ to generate events, considering only event-induced shift (Shift 3). In S2, we introduce informative censoring using $\lambda_{C}(t|x)$ (Shift 2+3). In S3, we use treatments and consider biased treatment assignment  (without censoring) (Shift 1+3). In S4, we consider the most difficult case with all three types of shift (Shift 1+2+3). In the latter two settings, we vary treatment selection by changing (i) whether the covariate set overlaps with the event-inducing covariates ($\mathcal{P}{=}\{1, 2\}$) or not ($\mathcal{P}{=}\{9, 10\}$) and (ii) the selection strength $\xi \in \{1, 3\}$. We present exploratory plots of these DGPs in Appendix E.

Fig. \ref{fig:survivalfunc} (top) shows performance on estimating $S^0(t|x)= \prod\nolimits_{k \leq t}\big(1 - \lambda^{0}(k| x)\big)$ for all scenarios and methods, while Fig. \ref{fig:survivalfunc} (bottom) shows performance on estimating the difference in survival functions ($HTE_{surv}(t|x)$) for a selection of methods (for readability, full results in Appendix F). In Table \ref{Table:RMST}, we further evaluate the estimation of differences in RMST ($\text{HTE}_{rmst}(x)$).  
Results for hazard estimation and additional performance metrics for survival \textit{prediction} are reported in Appendix F. We observe that SurvITE (/SurvIHE) performs best throughout, and that introduction of the IPM term leads to substantial improvements across all scenarios. In S1 with only event-induced covariate shift and in S3/4 when treatment selection and event-inducing covariates overlap ($\mathcal{P}{=}\{1, 2\}$), balancing cannot remove all shift as the shift-inducing covariates are predictive of outcome; however, even here the IPM-term helps as it encourages dropping other covariates (which appear imbalanced due to correlations in $X$). While our method was motivated by theory for estimation of hazard functions, it thus indeed also leads to gains in survival function estimation. 
As expected, both Cox and LR-sep do not perform well as they are misspecified, while the nonparametric RSF is sufficiently flexible to capture the underlying DGP and usually performs similarly to SurvITE (architecture only), but is outperformed once the IPM term is added. For readability, we did not include DeepHit in Fig. \ref{fig:survivalfunc}; using table F.1 presented in Appendix F, we  observe that DeepHit performs worse than the SurvITE architecture without IPM term, indicating that our model architecture alone is better suited for estimation of treatment-specific survival functions (note that \cite{Changhee:AAAI18} focused mainly on discriminative (predictive) performance, and not on the estimation of the survival function itself). Therefore, upon addition of the IPM-terms, the performance gap between SurvITE and DeepHit only becomes larger.

A comparison with ablated versions highlights the effect of using the baseline population to define balance; naive balancing across treatment arms (either at baseline -- SurvITE(CFR-1), or over time -- SurvITE(CFR-2)) is not as effective as using the baseline population as a target, especially at the later time steps where the effects of time-varying shifts worsen. While SurvITE(CFR-2) almost matches the performance of the full SurvITE in S3, it performs considerably worse in S4, indicating that this form of balancing suffers mainly due to its ignorance of censoring.  Finally, a comparison with CSA highlights the value of modeling hazard functions directly: we found that Monte-Carlo approximation of the survival function using the generated event times gives very badly calibrated survival curves as event times generated by CSA were concentrated in a very narrow interval, leading to survival estimates of 0 and 1 elsewhere. Its performance on estimation of RMST was likewise poor; we conjecture that this is due to (i) CSA modeling continuous time, while the outcomes were generated using a coarse discrete time model, and (ii) the significant presence of administrative censoring.

\begin{table}[t!]
	\caption{RMSE on estimation of $\text{HTE}_{rmst}(x)$  (mean $\pm$ 95\%-CI) for different times for the Synthetic and Twins datasets ($L$s are the 25 \& 75th and 75 \& 95th percentiles of event times, respectively).} \label{Table:RMST}
	\begin{center}
	\fontsize{6.3}{7}\selectfont
        \begin{tabular}{c c c c c c c c c}
        \toprule
        \multirow{2}{*}{\textbf{Methods}}
        &\multicolumn{2}{c}{\textbf{S3} ($\zeta=3$, no overlap)}
        &\multicolumn{2}{c}{\textbf{S4} ($\zeta=3$, no overlap)}
        &\multicolumn{2}{c}{\textbf{Twins (no censoring)}} 
        &\multicolumn{2}{c}{\textbf{Twins (censoring)}} \\ 
        &$L=10$&$L=20$&$L=10$&$L=20$&$L=30$&$L=180$&$L=30$&$L=180$\\\midrule
        Cox         &0.434$\pm$0.03 &1.073$\pm$0.05 &0.424$\pm$0.02
                &1.047$\pm$0.04 &2.85$\pm$0.10 &20.33$\pm$0.50&2.88$\pm$0.09 &20.60$\pm$0.50\\ 
        RSF         &0.328$\pm$0.02 &1.027$\pm$0.03 &0.332$\pm$0.02 &1.058$\pm$0.03 &3.15$\pm$0.07 &22.42$\pm$0.36 &3.18$\pm$0.08 &22.62$\pm$0.46\\
        LR-sep      &0.412$\pm$0.02 &1.111$\pm$0.07 &0.418$\pm$0.02 &1.149$\pm$0.04 &2.94$\pm$0.10 &20.60$\pm$0.53 &2.94$\pm$0.10 &20.66$\pm$0.52\\
        DeepHit	    &0.347$\pm$0.03 &0.821$\pm$0.07 &0.361$\pm$0.08 &0.830$\pm$0.15 &2.95$\pm$0.28 &20.89$\pm$1.91 &2.86$\pm$0.09 &20.69$\pm$0.52\\
        CSA         &0.421$\pm$0.01 &2.098$\pm$0.26 &0.406$\pm$0.01 &1.932$\pm$0.12 &3.42$\pm$0.12 &26.20$\pm$1.21 &4.41$\pm$0.54 &47.79$\pm$1.55\\  \midrule
        \proposed~(no IPM) &0.275$\pm$0.04 &0.843$\pm$0.11 &0.310$\pm$0.05 &0.930$\pm$0.11 &2.80$\pm$0.10 &19.80$\pm$1.01 &2.85$\pm$0.22 &20.00$\pm$1.07 \\ 
        \proposed~(CFR-1)   &0.269$\pm$0.04 &0.825$\pm$0.09 &0.341$\pm$0.02 &1.016$\pm$0.10 &2.68$\pm$0.06 &19.16$\pm$0.37 &2.67$\pm$0.15 &19.10$\pm$0.85\\ 
         \proposed~(CFR-2)    &0.236$\pm$0.04 &0.691$\pm$0.08 &0.294$\pm$0.07 &0.815$\pm$0.15 &2.61$\pm$0.12 &18.69$\pm$0.64 &2.69$\pm$0.22 &19.20$\pm$1.44\\
        \textbf{\proposed} &\textbf{0.225$\pm$0.03} &\textbf{0.687$\pm$0.08} &\textbf{0.237$\pm$0.03} &\textbf{0.703$\pm$0.06} &\textbf{2.53$\pm$0.09} &\textbf{18.34$\pm$0.70} &\textbf{2.63$\pm$0.10} &\textbf{18.76$\pm$0.56}\\ \bottomrule
        \end{tabular}
	\end{center}
	        \vskip -0.1in
\end{table}

\paragraph{Real data: Twins.}\label{sec:twins}
Finally, we consider the Twins benchmark dataset, containing survival times (in days, administratively censored at t=365) of 11400 pairs of twins, which is used in \cite{louizos2017causal, yoon2018ganite} to measure HTEs of birthweight on infant mortality. We split the data 50/50 for training and testing (by twin pairs), and similar to \cite{yoon2018ganite}, use a covariate-based sampling mechanism to select only one twin for training to emulate selection bias. Further, we consider a second setting where we additionally introduce covariate-dependent censoring. For all discrete-time models, we use a non-uniform discretization to construct classification tasks because most events are concentrated in the first weeks. A more detailed description of the data and experimental setup can be found in Appendix E. As the data is real and ground truth probabilities are unknown, $\text{HTE}_{rmst}(x)$ is suited best to evaluate performance on estimating effect heterogeneity. The results presented in Table \ref{Table:RMST} largely confirm our findings on relative performance in the synthetic experiments; only RSF performs relatively worse on this dataset. 

\section{Conclusion} We studied the problem of inferring heterogeneous treatment effects from time-to-event data by focusing on the challenges inherent to treatment-specific hazard estimation. We found that a variety of covariate shifts play a role in this context, theoretically analysed their impact, and demonstrated across a range of experiments that our proposed method \proposed~successfully mitigates them.

\textbf{Limitations.} Like all methods for inferring causal effects from observational data, \proposed~relies on a set of strong assumptions which should be evaluated by a domain expert prior to deployment in practice. Here, the time-to-event nature of our problem adds an additional assumption (`random censoring') to the standard `no hidden confounders' assumption in classical treatment effect estimation. If such assumptions are not properly assessed in practice, any causal conclusions may be misleading. 

\clearpage

\acksection
We thank anonymous reviewers as well as members of the vanderschaar-lab for many insightful comments and suggestions. AC gratefully acknowledges funding from AstraZeneca. CL was supported through the IITP grant funded by the Korea government(MSIT) (No. 2021-0-01341, AI Graduate School Program, CAU). Additionally, MvdS received funding from the Office of Naval Research (ONR) and the National Science Foundation (NSF, grant number 1722516).

\bibliography{arxiv_version}

\begin{thebibliography}{10}

\bibitem{hill2011bayesian}
Jennifer~L Hill.
\newblock Bayesian nonparametric modeling for causal inference.
\newblock {\em Journal of Computational and Graphical Statistics},
  20(1):217--240, 2011.

\bibitem{athey2016recursive}
Susan Athey and Guido Imbens.
\newblock Recursive partitioning for heterogeneous causal effects.
\newblock {\em Proceedings of the National Academy of Sciences},
  113(27):7353--7360, 2016.

\bibitem{wager2018estimation}
Stefan Wager and Susan Athey.
\newblock Estimation and inference of heterogeneous treatment effects using
  random forests.
\newblock {\em Journal of the American Statistical Association},
  113(523):1228--1242, 2018.

\bibitem{athey2019generalized}
Susan Athey, Julie Tibshirani, Stefan Wager, et~al.
\newblock Generalized random forests.
\newblock {\em The Annals of Statistics}, 47(2):1148--1178, 2019.

\bibitem{hahn2017bayesian}
P~Richard Hahn, Jared~S Murray, Carlos~M Carvalho, et~al.
\newblock Bayesian regression tree models for causal inference: Regularization,
  confounding, and heterogeneous effects (with discussion).
\newblock {\em Bayesian Analysis}, 15(3):965--1056, 2020.

\bibitem{alaa2017bayesian}
Ahmed~M Alaa and Mihaela van~der Schaar.
\newblock Bayesian inference of individualized treatment effects using
  multi-task gaussian processes.
\newblock {\em Advances in Neural Information Processing Systems},
  30:3424--3432, 2017.

\bibitem{alaa2018limits}
Ahmed Alaa and Mihaela van~der Schaar.
\newblock Limits of estimating heterogeneous treatment effects: Guidelines for
  practical algorithm design.
\newblock In {\em International Conference on Machine Learning}, pages
  129--138, 2018.

\bibitem{johansson2016learning}
Fredrik Johansson, Uri Shalit, and David Sontag.
\newblock Learning representations for counterfactual inference.
\newblock In {\em International conference on machine learning}, pages
  3020--3029. PMLR, 2016.

\bibitem{Shalit:16}
Uri Shalit, Fredrik~D Johansson, and David Sontag.
\newblock Estimating individual treatment effect: generalization bounds and
  algorithms.
\newblock In {\em International Conference on Machine Learning}, pages
  3076--3085. PMLR, 2017.

\bibitem{johansson2018learning}
Fredrik~D Johansson, Nathan Kallus, Uri Shalit, and David Sontag.
\newblock Learning weighted representations for generalization across designs.
\newblock {\em arXiv preprint arXiv:1802.08598}, 2018.

\bibitem{shi2019adapting}
Claudia Shi, David Blei, and Victor Veitch.
\newblock Adapting neural networks for the estimation of treatment effects.
\newblock In {\em Advances in Neural Information Processing Systems}, pages
  2507--2517, 2019.

\bibitem{hassanpour2019counterfactual}
Negar Hassanpour and Russell Greiner.
\newblock Counterfactual regression with importance sampling weights.
\newblock In {\em IJCAI}, pages 5880--5887, 2019.

\bibitem{hassanpour2020learning}
Negar Hassanpour and Russell Greiner.
\newblock Learning disentangled representations for counterfactual regression.
\newblock In {\em International Conference on Learning Representations}, 2020.

\bibitem{assaad2021counterfactual}
Serge Assaad, Shuxi Zeng, Chenyang Tao, Shounak Datta, Nikhil Mehta, Ricardo
  Henao, Fan Li, and Lawrence~Carin Duke.
\newblock Counterfactual representation learning with balancing weights.
\newblock In {\em International Conference on Artificial Intelligence and
  Statistics}, pages 1972--1980. PMLR, 2021.

\bibitem{curth2020}
Alicia Curth and Mihaela van~der Schaar.
\newblock Nonparametric estimation of heterogeneous treatment effects: From
  theory to learning algorithms.
\newblock {\em Proceedings of the 24th International Conference on Artificial
  Intelligence and Statistics (AISTATS) 2021 (To Appear); arXiv preprint
  arXiv:2101.10943}, 2021.

\bibitem{tutz2016modeling}
Gerhard Tutz, Matthias Schmid, et~al.
\newblock {\em Modeling discrete time-to-event data}.
\newblock Springer, 2016.

\bibitem{klein2003survival}
John~P Klein and Melvin~L Moeschberger.
\newblock {\em Survival analysis: techniques for censored and truncated data},
  volume 1230.
\newblock Springer, 2003.

\bibitem{Cox:72}
David~R. Cox.
\newblock Regression models and life tables (with discussion).
\newblock {\em Journal of the Royal Statistical Society. Series B},
  34:187--220, 1972.

\bibitem{van2011targeted}
Mark~J Van~der Laan and Sherri Rose.
\newblock {\em Targeted learning: causal inference for observational and
  experimental data}.
\newblock Springer Science \& Business Media, 2011.

\bibitem{stitelman2010collaborative}
Ori~M Stitelman and Mark~J van~der Laan.
\newblock Collaborative targeted maximum likelihood for time to event data.
\newblock {\em The International Journal of Biostatistics}, 6(1), 2010.

\bibitem{stitelman2011targeted}
Ori~M Stitelman, C~William Wester, Victor De~Gruttola, and Mark~J van~der Laan.
\newblock Targeted maximum likelihood estimation of effect modification
  parameters in survival analysis.
\newblock {\em The international journal of biostatistics}, 7(1), 2011.

\bibitem{cai2019targeted}
Weixin Cai and Mark~J. van~der Laan.
\newblock One-step targeted maximum likelihood estimation for time-to-event
  outcomes.
\newblock {\em Biometric Methodology}, 2019.

\bibitem{tabib2020non}
Sami Tabib and Denis Larocque.
\newblock Non-parametric individual treatment effect estimation for survival
  data with random forests.
\newblock {\em Bioinformatics}, 36(2):629--636, 2020.

\bibitem{henderson2020individualized}
Nicholas~C Henderson, Thomas~A Louis, Gary~L Rosner, and Ravi Varadhan.
\newblock Individualized treatment effects with censored data via fully
  nonparametric bayesian accelerated failure time models.
\newblock {\em Biostatistics}, 21(1):50--68, 2020.

\bibitem{zhang2017mining}
Weijia Zhang, Thuc~Duy Le, Lin Liu, Zhi-Hua Zhou, and Jiuyong Li.
\newblock Mining heterogeneous causal effects for personalized cancer
  treatment.
\newblock {\em Bioinformatics}, 33(15):2372--2378, 2017.

\bibitem{cui2020estimating}
Yifan Cui, Michael~R Kosorok, Stefan Wager, and Ruoqing Zhu.
\newblock Estimating heterogeneous treatment effects with right-censored data
  via causal survival forests.
\newblock {\em arXiv preprint arXiv:2001.09887}, 2020.

\bibitem{chapfuwa2021enabling}
Paidamoyo Chapfuwa, Serge Assaad, Shuxi Zeng, Michael~J Pencina, Lawrence
  Carin, and Ricardo Henao.
\newblock Enabling counterfactual survival analysis with balanced
  representations.
\newblock In {\em Proceedings of the Conference on Health, Inference, and
  Learning}, pages 133--145, 2021.

\bibitem{Pearl:09}
Judea Pearl.
\newblock {\em {Causality}}.
\newblock Cambridge University Press, 2009.

\bibitem{kouw2018introduction}
Wouter~M Kouw and Marco Loog.
\newblock An introduction to domain adaptation and transfer learning.
\newblock {\em arXiv preprint arXiv:1812.11806}, 2018.

\bibitem{shimodaira2000improving}
Hidetoshi Shimodaira.
\newblock Improving predictive inference under covariate shift by weighting the
  log-likelihood function.
\newblock {\em Journal of statistical planning and inference}, 90(2):227--244,
  2000.

\bibitem{cortes2010learning}
Corinna Cortes, Yishay Mansour, and Mehryar Mohri.
\newblock Learning bounds for importance weighting.
\newblock In {\em Nips}, volume~10, pages 442--450. Citeseer, 2010.

\bibitem{johansson2020generalization}
Fredrik~D Johansson, Uri Shalit, Nathan Kallus, and David Sontag.
\newblock Generalization bounds and representation learning for estimation of
  potential outcomes and causal effects.
\newblock {\em arXiv preprint arXiv:2001.07426}, 2020.

\bibitem{johansson2019support}
Fredrik~D Johansson, David Sontag, and Rajesh Ranganath.
\newblock Support and invertibility in domain-invariant representations.
\newblock In {\em The 22nd International Conference on Artificial Intelligence
  and Statistics}, pages 527--536. PMLR, 2019.

\bibitem{cuturi2014wass}
Marco Cuturi and Arnaud Doucet.
\newblock Fast computation of wasserstein barycenters.
\newblock In {\em International Conference on Machine Learning}, pages
  685--693. PMLR, 2014.

\bibitem{yoon2018ganite}
Jinsung Yoon, James Jordon, and Mihaela van~der Schaar.
\newblock Ganite: Estimation of individualized treatment effects using
  generative adversarial nets.
\newblock In {\em International Conference on Learning Representations}, 2018.

\bibitem{kunzel2019metalearners}
S{\"o}ren~R K{\"u}nzel, Jasjeet~S Sekhon, Peter~J Bickel, and Bin Yu.
\newblock Metalearners for estimating heterogeneous treatment effects using
  machine learning.
\newblock {\em Proceedings of the national academy of sciences},
  116(10):4156--4165, 2019.

\bibitem{kennedy2020optimal}
Edward~H Kennedy.
\newblock Optimal doubly robust estimation of heterogeneous causal effects.
\newblock {\em arXiv preprint arXiv:2004.14497}, 2020.

\bibitem{nie2021quasi}
Xinkun Nie and Stefan Wager.
\newblock Quasi-oracle estimation of heterogeneous treatment effects.
\newblock {\em Biometrika}, 108(2):299--319, 2021.

\bibitem{faraggi1995neural}
David Faraggi and Richard Simon.
\newblock A neural network model for survival data.
\newblock {\em Statistics in medicine}, 14(1):73--82, 1995.

\bibitem{Katzman:16}
Jared Katzman, Uri Shaham, Jonathan Bates, Alexander Cloninger, Tingting Jiang,
  and Yuval Kluger.
\newblock Deep survival: A deep cox proportional hazards network.
\newblock {\em arXiv preprint arXiv:1606.00931}, 2016.

\bibitem{Luck:17}
Margaux Luck, Tristan Sylvain, Héloïse Cardinal, Andrea Lodi, and Yoshua
  Bengio.
\newblock Deep learning for patient-specific kidney graft survival analysis.
\newblock {\em arXiv preprint arXiv:1705.10245}, 2017.

\bibitem{Hothorn:06}
Torsten Hothorn, Kurt Hornik, and Achim Zeileis.
\newblock Unbiased recursive partitioning: A conditional inference framework.
\newblock {\em Journal of Computational and Graphical Statistics},
  15(3):651--674, 2006.

\bibitem{ranganath:16}
R.~Ranganath, A.~Perotte, N.~Elhadad, and D.~Blei.
\newblock Deep survival analysis.
\newblock {\em In Proceedings of the 1st Machine Learning for Healthcare
  Conference (MLHC 2016)}, 2016.

\bibitem{chapfuwa2018adversarial}
Paidamoyo Chapfuwa, Chenyang Tao, Chunyuan Li, Courtney Page, Benjamin
  Goldstein, Lawrence~Carin Duke, and Ricardo Henao.
\newblock Adversarial time-to-event modeling.
\newblock In {\em International Conference on Machine Learning}, pages
  735--744. PMLR, 2018.

\bibitem{steingrimsson2019censoring}
Jon~Arni Steingrimsson, Liqun Diao, and Robert~L Strawderman.
\newblock Censoring unbiased regression trees and ensembles.
\newblock {\em Journal of the American Statistical Association},
  114(525):370--383, 2019.

\bibitem{steingrimsson2020deep}
Jon~Arni Steingrimsson and Samantha Morrison.
\newblock Deep learning for survival outcomes.
\newblock {\em Statistics in medicine}, 39(17):2339--2349, 2020.

\bibitem{avati2020countdown}
Anand Avati, Tony Duan, Sharon Zhou, Kenneth Jung, Nigam~H Shah, and Andrew~Y
  Ng.
\newblock Countdown regression: sharp and calibrated survival predictions.
\newblock In {\em Uncertainty in Artificial Intelligence}, pages 145--155.
  PMLR, 2020.

\bibitem{Ishwaran:08}
Hemant Ishwaran, Udaya~B. Kogalur, Eugene~H. Blackstone, and Michael~S. Lauer.
\newblock Random survival forests.
\newblock {\em The Annals of Applied Statistics}, 2(3):841--860, September
  2008.

\bibitem{brown1975use}
Charles~C Brown.
\newblock On the use of indicator variables for studying the time-dependence of
  parameters in a response-time model.
\newblock {\em Biometrics}, pages 863--872, 1975.

\bibitem{biganzoli1998feed}
Elia Biganzoli, Patrizia Boracchi, Luigi Mariani, and Ettore Marubini.
\newblock Feed forward neural networks for the analysis of censored survival
  data: a partial logistic regression approach.
\newblock {\em Statistics in medicine}, 17(10):1169--1186, 1998.

\bibitem{gensheimer2019scalable}
Michael~F Gensheimer and Balasubramanian Narasimhan.
\newblock A scalable discrete-time survival model for neural networks.
\newblock {\em PeerJ}, 7:e6257, 2019.

\bibitem{ren2017dsra}
Kan Ren, Jiarui Qin, Lei Zheng, Zhengyu Yang, Weinan Zhang, Lin Qiu, and Yong
  Yu.
\newblock Deep recurrent survival analysis.
\newblock {\em In Proceedings of the 33rd AAAI Conference on Artificial
  Intelligence (AAAI 2019)}, 2019.

\bibitem{kvamme2019continuous}
H{\aa}vard Kvamme and {\O}rnulf Borgan.
\newblock Continuous and discrete-time survival prediction with neural
  networks.
\newblock {\em arXiv preprint arXiv:1910.06724}, 2019.

\bibitem{Changhee:AAAI18}
Changhee Lee, William~R. Zame, Jinsung Yoon, and Mihaela van~der Schaar.
\newblock Deephit: A deep learning approach to survival analysis with competing
  risks.
\newblock {\em In Proceedings of the 32th AAAI Conference on Artificial
  Intelligence (AAAI 2018)}, 2018.

\bibitem{yu2011learning}
Chun-Nam Yu, Russell Greiner, Hsiu-Chin Lin, and Vickie Baracos.
\newblock Learning patient-specific cancer survival distributions as a sequence
  of dependent regressors.
\newblock {\em Advances in Neural Information Processing Systems},
  24:1845--1853, 2011.

\bibitem{hu2020estimating}
Liangyuan Hu, Jiayi Ji, and Fan Li.
\newblock Estimating heterogeneous survival treatment effect in observational
  data using machine learning.
\newblock {\em arXiv preprint arXiv:2008.07044}, 2020.

\bibitem{diaz2018targeted}
Iv{\'a}n D{\'\i}az, Oleksandr Savenkov, and Karla Ballman.
\newblock Targeted learning ensembles for optimal individualized treatment
  rules with time-to-event outcomes.
\newblock {\em Biometrika}, 105(3):723--738, 2018.

\bibitem{louizos2017causal}
Christos Louizos, Uri Shalit, Joris Mooij, David Sontag, Richard Zemel, and Max
  Welling.
\newblock Causal effect inference with deep latent-variable models.
\newblock {\em In Proceedings of the 31st Conference on Neural Information
  Processing Systems (NeurIPS 2017)}, 2017.

\bibitem{holland1986statistics}
Paul~W Holland.
\newblock Statistics and causal inference.
\newblock {\em Journal of the American statistical Association},
  81(396):945--960, 1986.

\bibitem{rosenbaum1983central}
Paul~R Rosenbaum and Donald~B Rubin.
\newblock The central role of the propensity score in observational studies for
  causal effects.
\newblock {\em Biometrika}, 70(1):41--55, 1983.

\bibitem{rubin1976inference}
Donald~B Rubin.
\newblock Inference and missing data.
\newblock {\em Biometrika}, 63(3):581--592, 1976.

\bibitem{lafferty2001conditional}
John Lafferty, Andrew McCallum, and Fernando~CN Pereira.
\newblock Conditional random fields: Probabilistic models for segmenting and
  labeling sequence data.
\newblock 2001.

\bibitem{rubin2005causal}
Donald~B Rubin.
\newblock Causal inference using potential outcomes: Design, modeling,
  decisions.
\newblock {\em Journal of the American Statistical Association},
  100(469):322--331, 2005.

\bibitem{Xavier:10}
X.~Glorot and Y.~Bengio.
\newblock Understanding the difficulty of training deep feedforward neural
  networks.
\newblock {\em In Proceedings of the 13th International Conference on
  Artificial Intelligence and Statistics (AISTATS 2010)}, 2010.

\bibitem{Adam:14}
D.~P. Kingma and J.~Ba.
\newblock Adam: A method for stochastic optimization.
\newblock {\em In Proceedings of the 3rd International Conference on Learning
  Representations (ICLR 2015)}, 2015.

\bibitem{assaad2021balancingweights}
Serge Assaad, Shuxi Zeng, Chenyang Tao, Shounak Datta, Nikhil Mehta, Ricardo
  Henao, Fan Li, and Lawrence Carin.
\newblock Counterfactual representation learning with balancing weights.
\newblock {\em In Proceedings of the 24th International Conference on
  Artificial Intelligence and Statistics (AISTATS 2021)}, 2021.

\bibitem{sksurv}
Sebastian P{\"o}lsterl.
\newblock scikit-survival: A library for time-to-event analysis built on top of
  scikit-learn.
\newblock {\em Journal of Machine Learning Research}, 21(212):1--6, 2020.

\bibitem{Almond:05}
Douglas Almond, Kenneth~Y. Chay, and David~S. Lee.
\newblock The costs of low birth weight.
\newblock {\em The Quarterly Journal of Economics}, 120(3):1031--1083, 2005.

\bibitem{uno2011cindex}
H.~Uno, T.~Cai, M.~J. Pencina, R.~B. D’Agostino, and L.~J. Wei.
\newblock On the c-statistics for evaluating overall adequacy of risk
  prediction procedures with censored survival data.
\newblock {\em Statistics in Medicine}, 30(10):1105--1117, 2011.

\end{thebibliography}
\bibliographystyle{unsrt}
\newpage

\newcommand{\newnumbering}{
\setcounter{figure}{0}\renewcommand{\thefigure}{\thesection.\arabic{figure}}
\setcounter{table}{0}\renewcommand{\thetable}{\thesection.\arabic{table}}
\setcounter{equation}{0}\renewcommand{\theequation}{\thesection.\arabic{equation}}
}

\appendix
\section*{Appendix}
This appendix is organized as follows: We first present an extended overview of the standard treatment effect estimation setup and discuss differences with the time-to-event setting (Appendix A). Then, we give an extended review of strategies for nonparametric estimation of survival dynamics (Appendix B). In Appendix C we discuss technical details -- assumptions and proofs -- and Appendix D we discuss implementation. Appendix E contains additional descriptions of datasets and experimental setup and Appendix F presents  additional results.
\section{Preliminaries on treatment effect estimation} 
In the standard treatment effect estimation setup with binary or continuous outcomes (see e.g. \cite{Shalit:16, alaa2018limits, curth2020}), one usually observes a dataset $\mathcal{D}=\{(a_i, x_i, y_i)\}^n_{i=1}$ comprising $n$ realizations of the tuple $(A, X, Y)$. $X\in \mathcal{X}$ and $A \in \{0, 1\}$ represent patient characteristics and treatment assignment as in the main text. $Y\in\mathcal{Y}$ is usually a binary ($\mathcal{Y}=\{0, 1\}$) or continuous ($\mathcal{Y}=\mathbb{R}$) outcome. The target parameter of interest is often the \textit{conditional average treatment effect (CATE)}
\begin{equation}
    \tau(x) = \mathbb{E}[Y|X=x, do(A=1)] - \mathbb{E}[Y|X=x, do(A=0)]
\end{equation}
which is impossible to estimate from observational data without further assumptions, as -- due to the \textit{fundamental problem of causal inference} \cite{holland1986statistics} -- every individual is only ever observed under \textit{one} of the two possible interventions. CATE can therefore only be nonparametrically estimated under the imposition of untestable assumptions;
here we rely on the standard ignorability assumptions \cite{rosenbaum1983central} of \textit{No hidden confounders (1.a)}, \textit{Consistency (1.c)} and \textit{Positivity/Overlap in treatment assignment (2.a)}.

\subsection{Comparison with the time-to-event treatment effects setup}
The time-to-event setting is made more involved by (i) the presence of censoring and (ii) the interest in the \textit{dynamics} of the underlying survival process.

Censoring -- the removal of some individuals from the sample before having observed their event time -- further complicates the treatment effect estimation problem, because every individual's outcome (time-to-event) is now observed under \textit{at most} one intervention. The presence of censoring adds an additional source of covariate shift, and the need to rely on the assumptions of \textit{Censoring at random (1.b)} and \textit{Positivity in censoring (2.b)}. Censoring is, however,  different from complete missingness of the outcome as the censoring time provides \textit{some} information on the outcome -- an individual has survived \textit{at least} until the censoring time. 

While the difference in expected survival time (the time-to-event equivalent in CATE) can be the treatment effect of interest in a study, many survival analysis problems are concerned with target parameters that capture differences in the \textit{dynamics} of the underlying survival process across treatments, e.g. hazard ratios or differences in survival functions -- which substantially increases the number of possible target parameters to model (beyond `only' CATE). Instead of only modeling expected outcomes (as would be the case in the standard setup as discussed above), modeling survival dynamics through e.g. the treatment-specific hazard function can therefore  often be of interest. Nonparametrically modeling hazard functions introduces the additional assumption on \textit{Positivity of events (2.c)}.
\section{Strategies for loss-based discrete-time hazard and survival function estimation} \newnumbering

In this section, we review strategies for nonparametric (or machine-learning based) estimation of the dynamics underlying discrete-time event processes. Here, we consider on the standard case \textit{without treatments} to highlight how a dependence on different populations arises in different modeling strategies, and follow closely the exposition of different strategies in \cite{kvamme2019continuous}. We focus on loss functions that can be used for implementation to highlight that these approaches are valid for use of \textit{any} classifier, and then briefly mention specific instantiations of such approaches from related work. 

\textbf{Preliminaries.}  In addition to hazard and survival function defined in the main text, define the probability mass  functions (PMF) as 
\begin{equation}
    f(\tau|x) = \PR(T=\tau|X=x) \text{ and }  f_C(\tau|x) = \PR(C=\tau|X=x) 
\end{equation}
Note that a hazard $\lambda(\tau|x)=\PR(T=\tau|T\leq \tau, X=x)$ can then also be defined as $\lambda(\tau|x)=\frac{f(\tau|x)}{S(\tau-1|x)}$. Further, recall that the survival function $S(\tau|x)=\prod_{t \leq \tau}\big(1 - \lambda(t|x)\big)$, so that the PMF can be rewritten as $f(\tau|x)=\lambda(\tau|x)S(\tau-1|x)=\lambda(\tau|x)\prod_{t \leq \tau-1}\big(1 - \lambda(t|x)\big)$.

\subsection{Likelihood-based hazard estimation}
Under the assumption of random censoring (which is discussed further in Appendix \ref{sec:assumptions}), the likelihood function of the observed (short) data factorizes; i.e. 
\begin{equation*}
\begin{split}
    &\PR(\tilde{T}=\tilde{\tau}, \Delta=\delta|X=x) = \PR(T=\tilde{\tau}, C \geq \tilde{\tau}|X=x)^\delta  \PR(T > \tilde{\tau}, C =\tilde{\tau}|X=x)^{1-\delta}  \\
    &~~~~~~~~~~~~~~~~~~=\big[\PR(T=\tilde{\tau}|X=x) \PR(C\geq \tilde{\tau}|X=x)\big]^\delta \big[\PR(T>\tilde{\tau}|X=x) \PR(C= \tilde{\tau}|X=x)\big]^{1-\delta}  \\
    &~~~~~~~~~~~~~~~~~~= \big[f(\tilde{\tau}|x)(S_C(\tilde{\tau}|x) + f_C(\tilde{\tau}|x))\big]^\delta \big[S(\tilde{\tau}|x)f_C(\tilde{\tau}|x))\big]^{1-\delta} \\
    &~~~~~~~~~~~~~~~~~~= \underbrace{f(\tilde{\tau}|x)^\delta S(\tilde{\tau}|x)^{1-\delta}}_{\text{Event-relevant}} \underbrace{f_C(\tilde{\tau}|x)^{1-\delta}(S_C(\tilde{\tau}|x)+f_C(\tilde{\tau}|x))^\delta}_{\text{Ignorable censoring mechanism}}
\end{split}
\end{equation*}
By the likelihood principle, the parts pertaining to censoring are \textit{ignorable}, hence we can consider censoring and event likelihoods separately \cite{rubin1976inference}. The likelihood contribution of observation $i$ to the negative time-to-event likelihood can then be written as:
\begin{equation}
    L_i= - f(\tilde{\tau}_i|x_i)^{\delta_i} S(\tilde{\tau}_i|x_i)^{1-\delta_i}
    = \lambda(\tilde{\tau}_i|x)^{\delta_i} (1-\lambda(\tilde{\tau}_i|x))^{1-\delta_i} \prod_{t \leq \tilde{\tau}_i-1}\big(1 - \lambda(t|x)\big)
\end{equation}
so that, after taking the logarithm and summing over all $i\in [n]$ we have that
\begin{equation} \label{eq:loss_longformat}
\begin{split}
\mathcal{L} =  - \sum^n_{i=1} \big(\delta_i log(\lambda(\tilde{\tau}_i|x))+(1-\delta_i)log(1-\lambda(\tilde{\tau}_i|x)) + \sum_{t \leq \tilde{\tau}_i-1} log(1-\lambda(t|x))\big)\\
=-  \sum^\tmax_{t=1} \sum^n_{i=1} \one(\tilde{\tau}_i \geq t) \big(y_i(t) log(\lambda(t|x))+(1-y_i(t))log(1-\lambda(t|x))\big)
\end{split}
\end{equation}
with $y_i(t)=\one(\tilde{\tau}_i=t, \delta_i=1) = \one(N_{T}(t)_i=1 \cap N_{T}(t\minus 1)_i=0)$ as in the main text. Thus, the classification approach with log-loss is \textit{equivalent} to optimizing for the likelihood of the hazard. Optimizing the likelihood of the hazard thus suffers from the exact same shifts as the classification approach, namely the shifts induced by focusing on the `at-risk' population at any time-step: the log-loss also has dependence on $\one(\tau_i \geq t)$. Note that, as we illustrate in section \ref{sec:misspecIllustration}, under such shifts, optimizing the likelihood is only problematic if the model for $\lambda(\tau|x)$ is misspecified -- a well-established fact in the literature on covariate shift \cite{shimodaira2000improving}.

Depending how $\lambda(\tau|x)$ is parameterized, different models proposed in related work arise. The idea to use a classification approach dates back to at least the logistic-hazard model in \cite{brown1975use}, and is reviewed in more detail in \cite{tutz2016modeling}. The first NN-based implementation that we are aware of is \cite{biganzoli1998feed}, which parameterizes $\lambda(\tau|x)$ by using one shared network for all $\tau \in \Tc$ where the time-indicator $\tau$ is passed as an additional covariate. \cite{gensheimer2019scalable} instead propose a network with some shared layers and $\tau$-specific output layers (resulting in a model similar to the SurvIHE base-model). Finally, \cite{ren2017dsra}'s DSRA parameterizes $\lambda(\tau|x)$ using a recurrent network which encodes the structure shared across time.

\subsubsection{Illustration: Why (mis)specification matters}\label{sec:misspecIllustration}
To briefly illustrate when event-induced at-risk population shift matters, we consider two simple toy examples: we rely on event-processes with covariate-dependent but time-constant hazards, i.e. $\lambda(\tau_1|x)=\lambda(\tau_2|x)$, and there are 5 multivariate normal correlated covariates, of which only $X_1$ determines the hazard. We parameterize hazard estimators using a separate logistic regression at each time step $t$. We consider one process where this logistic regression is correctly specified for the underlying hazard function, as $\lambda_1(\tau|x)=\sigma(x_1 - 0.25)$. We consider another process where this logistic regression is misspecified, as $\lambda_2(\tau|x)=\sigma(\one(x_1>0)x_1 - 0.25)$ (i.e. there is a nonlinearity that cannot be perfectly captured by a simple logistic regression).

\begin{figure*}[!h]
	\centering
	\begin{subfigure}[b]{0.4\linewidth}
		\centering
		\includegraphics[width=1.0\linewidth]{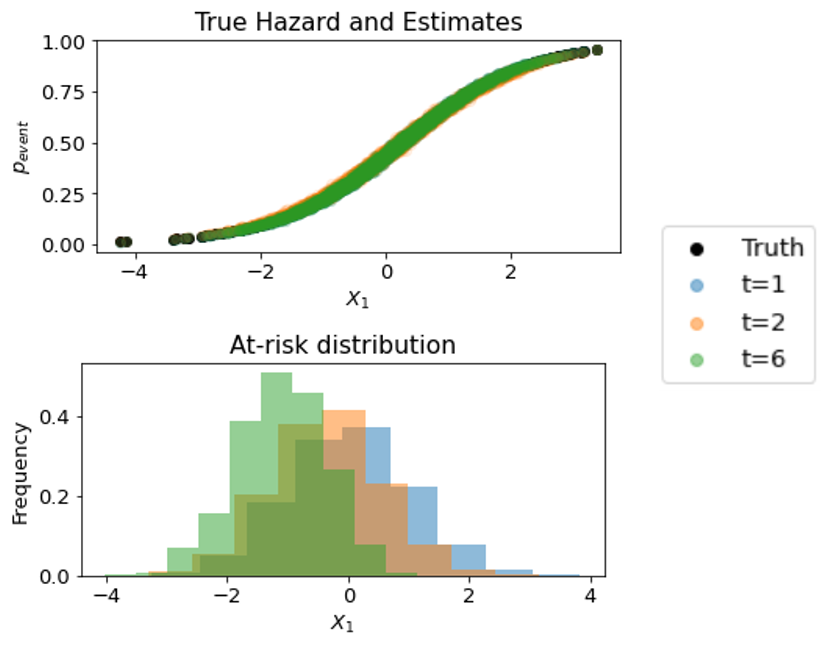} 
		\caption{Hazard estimates and at-risk distribution for $\lambda_1(\tau|x)$ (well-specified model)}
	\end{subfigure}~
	\begin{subfigure}[b]{0.4\linewidth}
		\centering
		\includegraphics[width=1.0\linewidth]{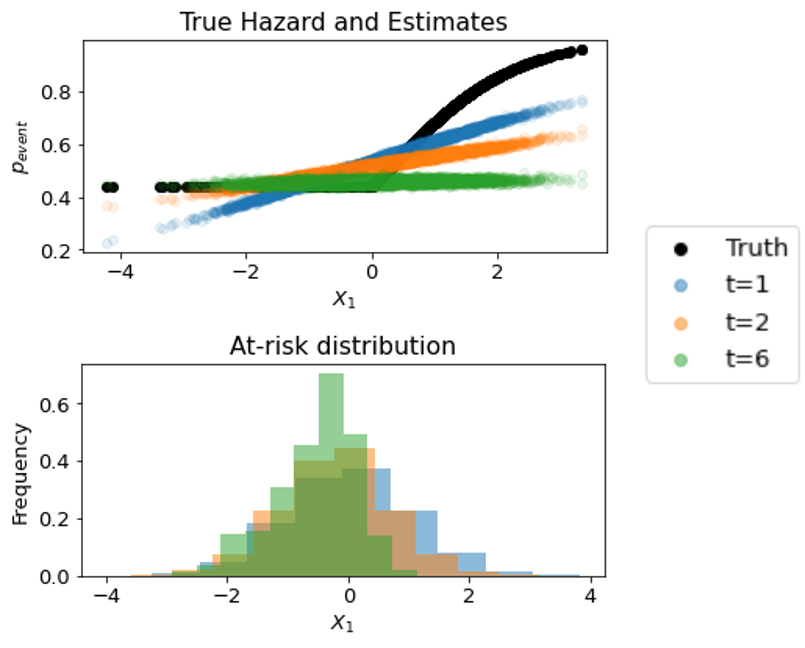} 
		\caption{Hazard estimates and at-risk distribution for $\lambda_2(\tau|x)$ (misspecified model)}
	\end{subfigure}~
	\caption{Toy example highlighting that covariate shift plays no role when models are well-specified (left) but matters under misspecification (right).} \label{fig:toyex} \vspace{-3mm}
\end{figure*}

As can be seen in Fig. \ref{fig:toyex}, both processes lead to event-induced covariate shift. However, this shift has no effect on hazard estimator performance over time when the model is correctly specified. Yet, when the model is incorrectly specified, the estimator has to trade off making errors in different regions of the covariate space. The optimal trade-off w.r.t. the baseline distribution is made by the hazard classifier at $t=1$ where the at-risk distribution corresponds to the marginal distribution of covariates. Due to event-induced covariate shift, hazard estimates become increasingly biased towards the survivor population at later time-steps. 

\subsection{Survival-based estimation} 
An alternative approach to targeting the likelihood of the hazard would be to target the survival function directly by realizing that $\PR(T>t|X=x)=\mathbb{E}[\one(T>t)|X=x]$, so that the survival function can be estimated directly by solving $\tmax$ classification problems with targets $\{\one(T>t)\}_{t \in \Tc}$. This considers a loss function
\begin{equation}
    \mathcal{L} =  - \sum^n_{i=1} \sum_{t \in \Tc}  \one(\tilde{\tau}_i > t)  \log(S(t|x_i)) + \delta_i(1 - \one(\tilde{\tau}_i > t)) \log(1-S(t|x_i))
\end{equation}
which suffers from censoring-induced covariate shift due to the interaction of $\delta_i(1 - \one(\tilde{\tau}_i > t))$; i.e. only non-censored individuals contribute to the `negative class' $\one(\tilde{\tau}_i \leq t)$, an effect that gets larger for $t$ large. 

The multi-task logistic regression approach proposed in \cite{yu2011learning} is a variant of the more general approach described above; it uses a modeling approach based on conditional random fields \cite{lafferty2001conditional} and jointly models all survival functions by accounting for the sequential nature of targets $\{\one(T>t)\}_{t \in \Tc}$ and the existance of a restricted set of `legal' values.

\subsection{PMF-based estimation}
Finally, instead of focussing on hazard or survival function, one could also estimate the PMF function; the PMF can be transformed to hazard or survival functions by realizing that $\lambda(1|x) = f(1|x)$ and $S(1|x)=(1-f(1|x))$. This can be done by treating the survival problem as a $\tmax$-class classification problem with one-hot encoded labels $(\one(\tau_i = t))_{t \in \Tc}$ leading to the loss
\begin{equation}
    \mathcal{L} =  - \sum^n_{i=1} \delta_i\log(f(\tilde{\tau}_i|x_i)) 
\end{equation}
so that each uncensored observation contributes mainly to the estimate of $f$ at its event-time step $\tau_i$  \cite{ren2017dsra} (instead of multiple time-steps as in the previous two subsections). Due to the presence of censoring indicator $\delta_i$, this suffers from censoring-induced covariate shift. As in \cite{Changhee:AAAI18}'s DeepHit, a likelihood contribution $(1-\delta_i) \log\big(\sum^\tmax_{t=\tilde{\tau}_i+1} f(t|x_i)\big)$ marginalizing over possible outcomes for all censored observations can be added, such that they contribute to $t> \tau_i$ by signalling that their event times are larger. For correctly specified models $f$ this corresponds to optimizing the likelihood of the PMF and is hence sufficient to correct for censoring, however, otherwise this does not exactly correct for censoring-induced covariate shift. 

\section{Technical details: Assumptions and Proofs} \newnumbering

\subsection{Assumptions}\label{sec:assumptions}
In this section, we discuss and formally state the assumptions made in Section 2. As e.g. \cite{stitelman2010collaborative, stitelman2011targeted, cai2019targeted}, we assume the fairly general causal structure encoded in the DAG in Figure 1. By assuming that observed data was generated from this DAG, the classical identifying assumptions (No Hidden Confounders, Censoring At Random, and Consistency) are implicitly formalized \cite{stitelman2010collaborative}. 

Equivalently, we can restate the assumptions using potential outcomes \cite{rubin2005causal} notation.  As in e.g. \cite{diaz2018targeted}, we let $T_a$ denote the potential event time that would have been observed had treatment a been assigned, and $C=\tmax$ been externally set. Then, the following assumptions are implied by the DAG:

\begin{assumption}[1.a No Hidden Confounders/ Unconfoundedness] Treatment assignment is random conditional on covariates, i.e. $T_a  \indep A | X$.
\end{assumption}

\begin{assumption}[1.b Censoring at random] Censoring and outcome are conditionally independent, i.e.  $T_a \indep C | X, A$.
\end{assumption}

\begin{assumption}[1.c Consistency] The observed outcomes are the potential outcomes under the observed intervention, i.e. if $A=a$ then $T=T_a$.
\end{assumption}

Then, we can write 
\begin{equation*}
\begin{split}
 \lambda^{a}(\tau|x) &= \PR(T=\tau | T \geq \tau, do(A=a, C\geq \tau), X=x)\\
 &= \PR(T_a=\tau | T_a \geq \tau, do(C\geq \tau), X=x) \\
 &= \PR(T_a=\tau | T_a \geq \tau, A=a, do(C\geq \tau), X=x) \\
 &=  \PR(T_a=\tau | T_a \geq \tau, C\geq \tau, A=a, X=x) \\
 &= \PR(T=\tau | T \geq \tau,    C\geq \tau, A=a, X=x) \\
 &=  \PR(\tilde{T}=\tau, \Delta=1|\tilde{T} \geq \tau, A=a, X=x) = \lambda(\tau|a,x) 
\end{split}
\end{equation*}
Here, the equalities in line one and two follow by definition, line three follows by assumption 1.a, line four follows by assumption 1.b,  the equality in line five follows by assumption 1.c, and the final line follows by definition. 

To enable nonparametric estimation of $\lambda^{a}(\tau|x)$ for some fixed $\tau \in \Tc$, we additionally consider a number of conditions on the likelihood of observing certain events.

\begin{assumption}[2.a Overlap/positivity (treatment assignment)] Treatment assignment is non-deterministic, i.e. for some $\epsilon_1>0$, we have that
$\epsilon_1 < \PR(A=a|X=x)<1-\epsilon_1$
\end{assumption}

\begin{assumption}[2.b Positivity (censoring)] Censoring is non-deterministic, i.e. for some $\epsilon_2>0$, we have that  $\PR(N_C(t)=0|A=a, X=x) = \PR(C>t|A=a, X=x) = >\epsilon_2$ for all $t < \tau$.
\end{assumption}

\begin{assumption}[2.c Positivity (events)] Not all events deterministically occur before time $\tau$, i.e.  $\mathbb{P}(N_T(\tau \minus 1)=0|A=a, X=x)> \PR(T>\tau-1|A=a, X=x) \epsilon_3>0$
\end{assumption}

Assumptions 1.a, 1.c and 2.a are standard within the treatment effect estimation literature \cite{alaa2018limits, Shalit:16}; assumptions 1.b and 2.b are standard within the literature with survival outcomes \cite{diaz2018targeted, cui2020estimating}. Assumption 2.c is needed only if we aim to estimate $\lambda^a(t|x)$ for all $t$, otherwise it would suffice to follow a convention such as setting $\lambda^a(t|x)=1$ whenever $\mathbb{P}(N_T(\tau \minus 1)=0|A=a, X=x)=0$.

\subsection{Proof of proposition 1}
In this section we state the proof of proposition 1 and restate two lemmas from \cite{johansson2019support} which we use within the proof.

\paragraph{Notation and definitions (restated)}
For fixed $a, \tau$ and representation $\Phi: \mathcal{X} \rightarrow \mathcal{R}$, let $\mathbb{P}_0^{\Phi}$, $\mathbb{P}^{\Phi}_{a, \tau}$ and $\mathbb{P}^{w, \Phi}_{a, \tau}$ denote the baseline, observational and weighted observational distribution w.r.t. the representation $\Phi$. Define the pointwise losses
\begin{equation}
\begin{split}
    l_{h, \mathbb{Q}}(x; a, \tau) &\defeq \mathbb{E}_{Y(\tau)|x, a \sim \mathbb{Q}}[\ell(Y(\tau), h(\Phi(X)))|X=x, A=a] \\
    l_{h, \mathbb{Q}^\Phi}(\phi; a, \tau) &\defeq \mathbb{E}_{Y(\tau)|\phi, a \sim \mathbb{Q}^\Phi}[\ell(Y(\tau), h(\Phi))|\Phi=\phi, A=a]
\end{split}
\end{equation}
of (hazard) hypothesis $h: \mathcal{R} \rightarrow [0,1]$ w.r.t. distributions in covariate and representation space, respectively. 
 
Further, define the integral probability metric distance (IPM) w.r.t. a function class $\mathcal{G}$ as 
\begin{equation}
    \IPM_\mathcal{G}(\mathbb{P}, \mathbb{Q}) = \sup_{g\in \mathcal{G}}\left|\int g(x) (\mathbb{P}(x) - \mathbb{Q}(x))dx\right|
\end{equation}

Define the excess target information loss $\eta^{\ell}_\Phi(h)$ analogously to \cite{johansson2019support} as 
\begin{equation}
    \eta^{\ell}_\Phi(h) \defeq \mathbb{E}_{X\sim \mathbb{P}_0}[\xi_{\mathbb{P}^\Phi, \mathbb{P}}(X) - \xi_{\mathbb{P}^{w,\Phi}_{a, \tau}, \mathbb{P}}(X)]
\end{equation}
  with 
 \begin{equation}
\xi_{ \mathbb{Q}^\Phi, \mathbb{Q}}(x) \defeq  \ell_{h, \mathbb{Q}^\Phi}(\phi; a, \tau) - \ell_{h, \mathbb{Q}}(x; a, \tau)
 \end{equation} 
 
\paragraph{Preliminaries}
\begin{lemma}[Adapted from Lemma A.3 in \cite{johansson2019support}]\label{lemma:1}
\begin{equation*}
\mathbb{E}_{X \sim \mathbb{Q}}[\ell_{h, \mathbb{Q}}(X; a, \tau)] = \mathbb{E}_{\Phi \sim \mathbb{Q}^\Phi}[\ell_{h, \mathbb{Q}^\Phi}(\Phi; a, \tau)]
\end{equation*}
\end{lemma}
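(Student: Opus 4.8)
The plan is to prove the identity by the law of iterated expectations (tower property), combined with the observation that the loss integrand depends on the covariate $X$ only through its representation $\Phi(X)$. Since $\Phi$ is a \emph{deterministic} map, $\mathbb{Q}^\Phi$ is simply the pushforward of $\mathbb{Q}$ under $(x, y) \mapsto (\Phi(x), y)$, and the claimed equality then reduces to the change-of-variables formula for pushforward measures.

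First I would unfold the left-hand side using the definition of the pointwise loss $\ell_{h, \mathbb{Q}}(x; a, \tau)$, which is itself a conditional expectation over $Y(\tau) \mid X=x, A=a$. Applying the tower property collapses the outer expectation over $X \sim \mathbb{Q}$ and the inner conditional expectation into a single joint expectation
$$\mathbb{E}_{(X, Y(\tau)) \sim \mathbb{Q}}\big[\ell(Y(\tau), h(\Phi(X)))\big],$$
taken over the joint law of $(X, Y(\tau))$ given $A=a$.

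Next I would exploit the fact that the integrand $\ell(Y(\tau), h(\Phi(X)))$ is a function of $X$ only through $\Phi(X)$: the hypothesis $h$ acts on $\Phi(X)$, so the integrand is measurable with respect to the $\sigma$-algebra generated by $(\Phi, Y(\tau))$. Consequently, by the change-of-variables formula, the joint expectation above equals $\mathbb{E}_{(\Phi, Y(\tau)) \sim \mathbb{Q}^\Phi}[\ell(Y(\tau), h(\Phi))]$, where $\mathbb{Q}^\Phi$ is the pushforward law of $(\Phi(X), Y(\tau))$. Applying the tower property once more in representation space — splitting this joint expectation into an outer expectation over $\Phi \sim \mathbb{Q}^\Phi$ and an inner conditional expectation over $Y(\tau) \mid \Phi=\phi, A=a$ — recovers exactly $\mathbb{E}_{\Phi \sim \mathbb{Q}^\Phi}[\ell_{h, \mathbb{Q}^\Phi}(\Phi; a, \tau)]$, which is the right-hand side.

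The only point requiring care is verifying that the conditional law $Y(\tau) \mid \Phi=\phi, A=a$ appearing in the definition of $\ell_{h, \mathbb{Q}^\Phi}$ is precisely the conditional of the pushforward measure; since $\Phi$ is deterministic, this conditional is the $\Phi$-fiber average of the conditionals $Y(\tau) \mid X=x, A=a$, so the two tower-property decompositions are mutually consistent and no extra hypothesis (in particular, no invertibility of $\Phi$) is needed. I expect this consistency check to be the main — though mild — obstacle; everything else is routine bookkeeping with iterated expectations.
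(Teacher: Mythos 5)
Your proposal is correct and follows essentially the same route as the paper's proof: both arguments collapse the pointwise loss into a joint expectation, use the fact that the integrand $\ell(Y(\tau), h(\Phi(X)))$ depends on $X$ only through $\Phi(X)$ to pass to the pushforward measure $\mathbb{Q}^\Phi$, and then re-split via iterated expectations. The paper simply writes this out as an explicit density calculation (integrating over the fibers $\{x : z = \Phi(x)\}$ and exchanging the order of integration), which is the concrete form of your tower-property/change-of-variables argument, including the consistency of the conditional law of $Y(\tau)$ given $\Phi$ that you flag.
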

\begin{proof} This proof is adapted to our notation and setting from \cite{johansson2019support} and stated for completeness. Let $y=y(\tau)$ and $z=\Phi(x)$
 \begin{equation*}
 \begin{split}
\mathbb{E}_{\Phi \sim \mathbb{Q}^\Phi}[\ell_{h, \mathbb{Q}^\Phi}(\Phi; a, \tau)] 
&= \int_{z, y} \mathbb{Q}^\Phi(z, y) \ell(y, h(z))dz dy \\
&=\int_{z, y}  \ell(y, h(z)) \int_{x:  z = \Phi(x)} \mathbb{Q}(x, y) dxdzdy  \\
&=\int_{x, y} \mathbb{Q}(x, y) \int_z \mathbbm{1}\{z=\Phi(x)\} \ell(y, h(z)) dzdxdy \\
&= \int_{x, y} \mathbb{Q}(x, y) \ell(y, h(\Phi(x))dxdy \\
&= \mathbb{E}_{X \sim \mathbb{Q}}[\ell_{h, \mathbb{Q}}(X; a, \tau)]
\end{split}
\end{equation*}
\end{proof}

\begin{lemma}[Adapted from Lemma A.4 in \cite{johansson2019support}]\label{Lemma:2}
\begin{equation*}
    \mathbb{E}_{X \sim \mathbb{P}_0}[\ell_{h, \mathbb{P}_0}(X; a, \tau)] =  \mathbb{E}_{\Phi \sim \mathbb{P}^\Phi_0}[l_{h, \mathbb{P}^{w, \Phi}_{a, \tau}}(\Phi; a, \tau)] + \eta^l_\Phi(h)
\end{equation*}
\end{lemma}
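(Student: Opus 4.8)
The plan is to prove the identity by first transporting the baseline-population risk from covariate space into representation space with Lemma~\ref{lemma:1}, and then inserting the target summand $\mathbb{E}_{\Phi \sim \mathbb{P}_0^\Phi}[\ell_{h, \mathbb{P}^{w, \Phi}_{a, \tau}}(\Phi; a, \tau)]$ by a simple add-and-subtract, leaving a residual that I would identify with $\eta^\ell_\Phi(h)$. Concretely, applying Lemma~\ref{lemma:1} with $\mathbb{Q} = \mathbb{P}_0$ gives $\mathbb{E}_{X \sim \mathbb{P}_0}[\ell_{h, \mathbb{P}_0}(X; a, \tau)] = \mathbb{E}_{\Phi \sim \mathbb{P}_0^\Phi}[\ell_{h, \mathbb{P}_0^\Phi}(\Phi; a, \tau)]$. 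Adding and subtracting $\ell_{h, \mathbb{P}^{w, \Phi}_{a, \tau}}(\Phi; a, \tau)$ inside the expectation then splits the right-hand side into the desired weighted-observational representation risk plus the residual $\mathbb{E}_{\Phi \sim \mathbb{P}_0^\Phi}[\ell_{h, \mathbb{P}_0^\Phi}(\Phi; a, \tau) - \ell_{h, \mathbb{P}^{w, \Phi}_{a, \tau}}(\Phi; a, \tau)]$.

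The crux is then to recognize this residual as $\eta^\ell_\Phi(h)$. I would unfold the definition $\eta^\ell_\Phi(h) = \mathbb{E}_{X \sim \mathbb{P}_0}[\xi_{\mathbb{P}^\Phi, \mathbb{P}}(X) - \xi_{\mathbb{P}^{w, \Phi}_{a, \tau}, \mathbb{P}}(X)]$ and substitute $\xi_{\mathbb{Q}^\Phi, \mathbb{Q}}(x) = \ell_{h, \mathbb{Q}^\Phi}(\Phi(x); a, \tau) - \ell_{h, \mathbb{Q}}(x; a, \tau)$. The decisive observation is that both $\xi$-terms carry the \emph{same} second argument $\mathbb{Q} = \mathbb{P}$, so the covariate-space losses $\ell_{h, \mathbb{P}}(X; a, \tau)$ are common to both and cancel upon subtraction. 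What survives is $\ell_{h, \mathbb{P}^\Phi}(\Phi(X); a, \tau) - \ell_{h, \mathbb{P}^{w, \Phi}_{a, \tau}}(\Phi(X); a, \tau)$, a function of $\Phi(X)$ alone, whose $\mathbb{P}_0$-expectation equals its $\mathbb{P}_0^\Phi$-expectation by the change-of-variables (pushforward) identity.

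It remains to match $\mathbb{P}^\Phi$ with $\mathbb{P}_0^\Phi$. Here I would invoke that $\mathbb{P}_0$ is by construction the marginal covariate distribution $\mathbb{P}(X)$ and shares the true conditional $\mathbb{P}(Y(\tau)\mid X, A)$; hence the pushforward law of $\Phi$ and its induced representation-space conditional are identical under $\mathbb{P}$ and $\mathbb{P}_0$, giving $\mathbb{P}^\Phi = \mathbb{P}_0^\Phi$ and $\ell_{h, \mathbb{P}^\Phi}(\cdot) = \ell_{h, \mathbb{P}_0^\Phi}(\cdot)$. With this identification the surviving expression coincides exactly with the residual above, so $\eta^\ell_\Phi(h)$ equals it and the lemma follows by combining the two displays.

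I expect the only genuine obstacle to be this last bookkeeping step: keeping straight which distribution supplies the marginal over $\Phi$ versus the conditional over $Y(\tau)$ in each loss, and verifying that the appendix's $\xi_{\mathbb{P}^\Phi, \mathbb{P}}$ indeed coincides with the main text's $\xi_{\mathbb{P}_0^\Phi, \mathbb{P}}$. Everything else is the routine cancellation of common terms plus one application each of Lemma~\ref{lemma:1} and the pushforward identity; no inequality or approximation enters, so the result is an exact equality.
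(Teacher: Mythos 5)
Your proof is correct and follows essentially the same route as the paper's: one application of Lemma~1 to move the baseline risk into representation space, an add-and-subtract of the weighted observational representation risk, and the observation that the covariate-space losses $\ell_{h,\mathbb{P}}(x;a,\tau)$ cancel inside $\eta^{\ell}_\Phi(h)$, leaving exactly the residual. Your extra care in reconciling the appendix's $\xi_{\mathbb{P}^\Phi,\mathbb{P}}$ with the main text's $\xi_{\mathbb{P}_0^\Phi,\mathbb{P}}$ (via $\mathbb{P}^\Phi=\mathbb{P}_0^\Phi$ since $\mathbb{P}_0$ is the marginal of $\mathbb{P}$) is a sensible clarification of a notational wrinkle the paper leaves implicit, but it does not change the argument.
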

\begin{proof} This proof is adapted to our notation and setting from \cite{johansson2019support} and stated for completeness. 
\begin{equation*}
\begin{split}
     &\mathbb{E}_{X \sim \mathbb{P}_0}[\ell_{h, \mathbb{P}_0}(X; a, \tau)] 
     = \mathbb{E}_{X \sim \mathbb{P}_0}[\mathbb{E}_{Y(\tau)|x, a \sim \mathbb{Q}}[\ell(Y(\tau), h(\Phi(X)))|X=x, A=a]]\\
    &~~~~~~~~~~~~~~~~~~~~~= \mathbb{E}_{\Phi \sim \mathbb{P}^\Phi_0}[\ell_{h, \mathbb{P}^\Phi_0}(\Phi; a, \tau)]  \text{  (by Lemma \ref{lemma:1})} \\
    &~~~~~~~~~~~~~~~~~~~~~= \mathbb{E}_{\Phi \sim \mathbb{P}^\Phi_0}[\ell_{h, \mathbb{P}^{w,\Phi}_{a, \tau}}(\Phi; a, \tau)] + \mathbb{E}_{\Phi \sim \mathbb{P}^\Phi_0}[\ell_{h, \mathbb{P}^\Phi_0}(\Phi; a, \tau)] - \mathbb{E}_{\Phi \sim \mathbb{P}^{\Phi}_0}[\ell_{h, \mathbb{P}^{w,\Phi}_{a, \tau}}(\Phi; a, \tau)] \\
   &~~~~~~~~~~~~~~~~~~~~~=  \mathbb{E}_{\Phi \sim \mathbb{P}^\Phi_0}[\ell_{h, \mathbb{P}^{w,\Phi}_{a, \tau}}(\Phi; a, \tau)] + \mathbb{E}_{\Phi \sim \mathbb{P}^\Phi_0}[\xi_{ \mathbb{P}^\Phi_0, \mathbb{P}}(X) - \xi_{ \mathbb{P}^{w,\Phi}_{a, \tau}, \mathbb{P}}(X)]\\
   &~~~~~~~~~~~~~~~~~~~~~=  \mathbb{E}_{\Phi \sim \mathbb{P}^\Phi_0}[ l_{h, \mathbb{P}^{w, \Phi}_{a, \tau}}(\Phi; a, \tau)] + \eta^l_\Phi(h)
\end{split}
\end{equation*}
where the second to last line follows as $l_{h, \mathbb{P}}(x; a, \tau)$ cancels in $\eta^l_\Phi(h)$  \end{proof}
 
\paragraph{Proof of proposition 1}
\begin{proposition}[Restated] Assume there exists a constant $C_\Phi>0$ s.t. ${C_\Phi}^{-1} \ell_{h,\mathbb{P}^{w, \Phi}_{a, \tau}}(\phi, a, \tau) \in \mathcal{G}$ for some family of functions $\mathcal{G}$. Then we have that 
\begin{equation}\label{eq:mainbound_apendix}
   \underbrace{\mathbb{E}_{X \sim \mathbb{P}_0}[\ell_{h, \mathbb{P}}(X; a, \tau)]}_{\text{Target Risk}} \leq  \underbrace{\mathbb{E}_{X \sim \mathbb{P}_{a, \tau}}[w_{a, \tau}(X)\ell_{h, \mathbb{P}}(X; a, \tau)]}_{\text{Weighted observational risk}} + C_\Phi \underbrace{\IPM_G(\mathbb{P}^{\Phi}_0, \mathbb{P}^{w, \Phi}_{a, \tau})}_{\text{Distance in } \Phi \text{-space}} + \underbrace{\eta^l_\Phi(h)}_{\text{Info loss}}
\end{equation}
\end{proposition}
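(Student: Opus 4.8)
The plan is to chain the two restated lemmas and close the remaining gap with a single IPM argument. The backbone is Lemma~\ref{Lemma:2}, which already rewrites the target risk in representation space as $\mathbb{E}_{\Phi \sim \mathbb{P}^\Phi_0}[\ell_{h, \mathbb{P}^{w, \Phi}_{a, \tau}}(\Phi; a, \tau)] + \eta^l_\Phi(h)$, so after invoking it the only work left is to control the leading expectation, in which the weighted observational loss function $\ell_{h, \mathbb{P}^{w, \Phi}_{a, \tau}}$ is integrated against the \emph{baseline} representation density $\mathbb{P}^\Phi_0$ rather than against its own density. Before doing this I would record the key observation that the covariate-space pointwise loss $\ell_{h, \mathbb{Q}}(x; a, \tau)$ depends on $\mathbb{Q}$ only through the conditional law $\mathbb{Q}(Y(\tau)\mid X=x, A=a)$; under the covariate-shift structure and the identifying assumptions of Section~\ref{sec:problem_definition} this conditional is the common true hazard across $\mathbb{P}_0$, $\mathbb{P}_{a,\tau}$, $\mathbb{P}^w_{a,\tau}$ and $\mathbb{P}$, since reweighting $X$ does not alter the law of $Y(\tau)$ given $X$. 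This licenses freely interchanging $\ell_{h, \mathbb{P}_0}$, $\ell_{h, \mathbb{P}^w_{a,\tau}}$ and $\ell_{h, \mathbb{P}}$ in covariate space, in particular matching the Target Risk of the statement (written with $\ell_{h,\mathbb{P}}$) with the left-hand side of Lemma~\ref{Lemma:2} (written with $\ell_{h,\mathbb{P}_0}$).

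Next I would add and subtract $\mathbb{E}_{\Phi \sim \mathbb{P}^{w, \Phi}_{a, \tau}}[\ell_{h, \mathbb{P}^{w, \Phi}_{a, \tau}}(\Phi; a, \tau)]$ inside the leading expectation. The resulting difference equals $\int \ell_{h, \mathbb{P}^{w, \Phi}_{a, \tau}}(\phi; a, \tau)\,(\mathbb{P}^\Phi_0(\phi) - \mathbb{P}^{w, \Phi}_{a, \tau}(\phi))\,d\phi$, and because the standing assumption guarantees $C_\Phi^{-1}\ell_{h, \mathbb{P}^{w, \Phi}_{a, \tau}}(\cdot; a, \tau) \in \mathcal{G}$, this integral is bounded in absolute value by $C_\Phi\,\IPM_\mathcal{G}(\mathbb{P}^\Phi_0, \mathbb{P}^{w, \Phi}_{a, \tau})$ straight from the definition of the IPM. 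This produces the middle term of~\eqref{eq:mainbound_apendix}.

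Finally I would transport the leftover representation-space term back to covariate space. Applying Lemma~\ref{lemma:1} with $\mathbb{Q} = \mathbb{P}^w_{a, \tau}$ converts $\mathbb{E}_{\Phi \sim \mathbb{P}^{w, \Phi}_{a, \tau}}[\ell_{h, \mathbb{P}^{w, \Phi}_{a, \tau}}(\Phi; a, \tau)]$ into $\mathbb{E}_{X \sim \mathbb{P}^w_{a, \tau}}[\ell_{h, \mathbb{P}^w_{a, \tau}}(X; a, \tau)]$; a change of measure using $\mathbb{P}^w_{a,\tau}(x) = w_{a,\tau}(x)\mathbb{P}_{a,\tau}(x)$, followed by the conditional-invariance observation to replace $\ell_{h, \mathbb{P}^w_{a,\tau}}$ by $\ell_{h, \mathbb{P}}$, rewrites it as the Weighted observational risk $\mathbb{E}_{X \sim \mathbb{P}_{a, \tau}}[w_{a, \tau}(X)\ell_{h, \mathbb{P}}(X; a, \tau)]$. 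Collecting the three contributions then yields~\eqref{eq:mainbound_apendix}.

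I expect the main obstacle to be careful bookkeeping rather than any deep step: one must track which distribution's conditional each $\ell_{h,\cdot}$ refers to, and in particular resist conflating the covariate-space loss functions (which \emph{are} invariant under reweighting of $X$) with the representation-space loss functions (which are \emph{not} invariant when $\Phi$ is non-invertible — exactly the discrepancy that $\eta^l_\Phi(h)$ absorbs in Lemma~\ref{Lemma:2}). Once that distinction is respected, the IPM bound and the importance-weight change of measure are both routine.
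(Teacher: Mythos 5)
Your proposal is correct and follows essentially the same route as the paper's proof: invoke Lemma~\ref{Lemma:2} to pass to representation space and extract $\eta^l_\Phi(h)$, bound the gap between $\mathbb{E}_{\Phi\sim\mathbb{P}^\Phi_0}$ and $\mathbb{E}_{\Phi\sim\mathbb{P}^{w,\Phi}_{a,\tau}}$ of $\ell_{h,\mathbb{P}^{w,\Phi}_{a,\tau}}$ by $C_\Phi\,\IPM_{\mathcal G}$, and return to covariate space via Lemma~\ref{lemma:1} plus the invariance of the conditional law of $Y(\tau)$ given $X, A$ under reweighting. Your explicit bookkeeping of which conditional each $\ell_{h,\cdot}$ refers to, and the explicit change of measure $\mathbb{P}^w_{a,\tau}(x)=w_{a,\tau}(x)\mathbb{P}_{a,\tau}(x)$, only make more precise the steps the paper compresses into its final line.
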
 
\begin{proof}
By Lemma \ref{Lemma:2},  
\begin{equation*}
    \mathbb{E}_{X \sim \mathbb{P}_0}[\ell_{h, \mathbb{P}_0}(X; a, \tau)] =  \mathbb{E}_{\Phi \sim \mathbb{P}^\Phi_0}[l_{h, \mathbb{P}^{w, \Phi}_{a, \tau}}(\Phi; a, \tau)] + \eta^l_\Phi(h)
\end{equation*}

Further, 
\begin{equation*}
\begin{split}
    &\mathbb{E}_{\Phi \sim \mathbb{P}^\Phi_0}[ l_{h, \mathbb{P}^{w, \Phi}_{a, \tau}}(\Phi; a, \tau)] -  \mathbb{E}_{\Phi \sim \mathbb{P}^{w, \Phi}_{a, \tau}}[ l_{h, \mathbb{P}^{w, \Phi}_{a, \tau}}(\Phi; a, \tau)] = \int_{\phi} \ell_{h, \mathbb{P}^{w, \Phi}_{a, \tau}}(\phi; a, \tau) (\mathbb{P}^\Phi_0(\phi) - \mathbb{P}^{w, \Phi}_{a, \tau}(\phi))d\phi \\
    &\qquad\qquad\qquad\qquad~~~~~~~~~~~~~~~~~~~~~~~~~~~~~~~~~~~~~~~~~~~~~~~= C_\Phi\int_{\phi} \frac{\ell_{h, \mathbb{P}^{w, \Phi}_{a, \tau}}(\phi; a, \tau)}{C_\Phi} (\mathbb{P}^\Phi_0(\phi) - \mathbb{P}^{w, \Phi}_{a, \tau}(\phi))d\phi \\
     &\qquad\qquad\qquad\qquad~~~~~~~~~~~~~~~~~~~~~~~~~~~~~~~~~~~~~~~~~~~~~~~\leq C_\Phi \sup_{g \in \mathcal{G}} \left|\int_{\phi} g(\phi) (\mathbb{P}^\Phi_0(\phi) - \mathbb{P}^{w, \Phi}_{a, \tau}(\phi))d\phi\right| \\
    &\qquad\qquad\qquad\qquad~~~~~~~~~~~~~~~~~~~~~~~~~~~~~~~~~~~~~~~~~~~~~~~= C_\Phi \IPM_G(\mathbb{P}^{\Phi}_0, \mathbb{P}^{w, \Phi}_{a, \tau})
\end{split}
\end{equation*}

Thus 
\begin{equation*}
\begin{split}
    \mathbb{E}_{X \sim \mathbb{P}_0}[\ell_{h, \mathbb{P}_0}(X; a, \tau)] &\leq \mathbb{E}_{\Phi \sim \mathbb{P}^{w, \Phi}_{a, \tau}}[ l_{h, \mathbb{P}^{w, \Phi}_{a, \tau}}(\Phi; a, \tau)] + C_\Phi \IPM_G(\mathbb{P}^{\Phi}_0, \mathbb{P}^{w, \Phi}_{a, \tau}) + \eta^l_\Phi(h) \\
    &=\mathbb{E}_{X \sim \mathbb{P}^{w}_{a, \tau}}[ \ell_{h, \mathbb{P}}(X; a, \tau)] + C_\Phi \IPM_G(\mathbb{P}^{\Phi}_0, \mathbb{P}^{w, \Phi}_{a, \tau}) + \eta^l_\Phi(h)
    \end{split}
\end{equation*}
where the last line follows by Lemma \ref{lemma:1} and the unconfoundedness and censoring at random assumptions, by which $ \ell_{h, \mathbb{P}}(X; a, \tau)= \ell_{h, \mathbb{P}^w_{a, \tau}}(X; a, \tau)$

\end{proof}

\section{Implementation} \newnumbering
We discuss implementation of \proposed~ and baselines in turn below. The source code for \proposed~is available in {\url{https://github.com/chl8856/survITE}}. Throughout the experiments, training \proposed~and its variants takes approximately 30 minutes to 1 hour on a single GPU machine\footnote{The specification of the machine is: CPU – Intel Core i7-8700K, GPU – NVIDIA GeForce GTX 1080Ti,and RAM – 64GB DDR4.}.

\subsection{\proposed}
Throughout the experiments, we implement \proposed~utilizing 3-layer fully-connected network (FC-Net) with 100 nodes in each layer for the representation estimator $\Phi$, and 2-layer FC-Net with 100 nodes in each layer for each hypothesis estimator $h_{a, t}$, respectively. The parameters $(\theta_\Phi, \theta_h)$ are initialized by Xavier initialization \cite{Xavier:10} and optimized via Adam optimizer \cite{Adam:14} with learning rate of 0.001 and dropout probability of 0.3. We choose the balancing coefficient $\beta$ within a set of possible candidates $\mathcal{B} = \{1., 0.1, 0.01, 0.001, 0.0001\}$ utilizing a grid search. More specifically, we select the highest value in $\mathcal{B}$ that does not compromise its discriminative performance (i.e. C-Index) based on the validation set (i.e., 20\% of the training set) to guarantee that the learned representation is balanced as much as possible to adjust for the covariate shift while being informative about the survival predictions. The effect of the balancing coefficient is further investigated in Section \ref{sec:sensitivity_analysis}.

{
\textbf{Finite-Sample Wasserstein Distance.}
For the finite sample approximation of the Wasserstein distance, we use Algorithm \ref{alg:wass} with the entropic regularization strength set to $\lambda=10$ and the number of Sinkhorn iterations set to $S=10$ following the implementation in \cite{cuturi2014wass,assaad2021balancingweights}. Thus, given two sets of samples $\mathcal{B}_{0}, \mathcal{B}_{1}$ based on the treatment-group time-step combinations, we can compute $Wass\big( \{\Phi(\xv_{i}) \}_{i\in \mathcal{B}_{0}} , \{\Phi(\xv_{i}) \}_{i\in \mathcal{B}_{1}}  \big)$ based on Algorithm \ref{alg:wass}.
}
\begin{center}
    \begin{minipage}{0.75\linewidth}
        \begin{algorithm}[H]
        	\caption{Pseudo-code for Finite Sample Wasserstein Distance}
        	\label{alg:wass}
        	\begin{algorithmic}
        	    \STATE {\bfseries Input:} Set $\mathcal{B}_{0}$, $\mathcal{B}_{1}$, entorpic regularization parameter $\lambda \in \mathbb{R}$, the number of Sinkhorn iterations $S$, representation $\theta_{\Phi}$
        	    \STATE $n_{1} = |\mathcal{B}_{1}|$ and $n_{0} = |\mathcal{B}_{0}|$
        	    \STATE $a = \frac{1}{n_{1}} \mathbf{1} \in \mathbb{R}^{n_{1}}$ and $b = \frac{1}{n_{0}} \mathbf{1} \in \mathbb{R}^{n_{0}}$
        	    \STATE $M^{(i,j)} = \| \Phi(\xv_{i}) - \Phi(\xv_{j})  \|_{2}$ $\forall{i}\in \mathcal{B}_{1}$, $\forall{j}\in \mathcal{B}_{0}$
        	    \STATE $K = \exp(-\lambda M)$
        	    \STATE $\tilde{K} = \text{diag}(a^{-1}) K$
        	    \STATE Initialize $u = a$
        	    \FOR{$s=1,\cdots, S$}
        	        \STATE $u = 1. / (\tilde{K} (b./(K^{T}u)) )$
        		\ENDFOR
        		\STATE $v=b./(K^{T}u).$
        		\STATE $T^{*}_{\lambda} = \text{diag}(u) K \text{diag}(v)$
        		\STATE {\bfseries Output:} $Wass\big( \{\Phi(\xv_{i}) \}_{i\in \mathcal{B}_{0}} , \{\Phi(\xv_{i}) \}_{i\in \mathcal{B}_{1}}  \big) \approx \sum_{i,j} T^{* (i,j)}_{\lambda} M^{(i,j)}$
        	\end{algorithmic}
        \end{algorithm}
    \end{minipage}
\end{center}

\textbf{Smoothing and Parameter Sharing.}
Employing a separate FC-Net at each time step provides sufficient capacity to estimate the hazard function accurately. However, this can be computationally burdensome as the number of parameters linearly increases with the number time steps considered in the study, and may result in having hypothesis estimators overfitted at the later time steps due to the scarcity of samples at those time steps. To avoid such issues, one can employ coarser time intervals for discritization or non-uniform time intervals (as in our experiments on the Twins dataset) such that finer time intervals are used in the earlier time steps and coarser time intervals are used in the later time steps to guarantee a sufficient amount of samples for training each hypothesis network. In addition to these immediate solutions, one could consider two different remedies that slightly change our model design:

\begin{itemize}[leftmargin=5.5mm]
    \item \textbf{Smoothing regularization}: We introduce an auxiliary regularization term that smooths the hazard estimators across time steps for each treatment group. This encourages the hazard estimators not to deviated too much from those at adjacent time steps. 
    Formally, the smoothing regularization is given by
    \begin{equation} \nonumber
        \loss_{smoothing}(\theta_{h}) = \sum_{a\in\{0,1\}} \sum_{t=1}^{\tmax} \| \theta_{h_{a,t}} - \theta_{h_{a,t
        \minus 1}}\|_{2}^{2}.
    \end{equation}
    \item \textbf{Parameter Sharing}: Instead of employing a separate FC-Net for each time step, we implement a single FC-Net for each treatment group that is shared throughout the time steps i.e., $t \in \Tc$, taking both the representation $\Phi(x)$ and $t$ as input to the network. Formally, the hazard function is defined as $h_{a}: \mathcal{R} \times \Tc \rightarrow [0,1]$.
\end{itemize}
We present experimental results using these approaches in Section \ref{sec:smoothing}.

\subsection{Details of Baselines}
We compared \proposed~with baselines ranging from commonly used survival methods to the state-of-the-art HTE methods based on deep neural networks. The details of how we implemented the benchmarks are described as the following:
\begin{itemize}[leftmargin=5.5mm]
    \item \textbf{Cox}\footnote{Python package \texttt{scikit-survival} \cite{sksurv} \label{footnote:sksurv}} \cite{Cox:72}, \textbf{RSF}\footref{footnote:sksurv} \cite{Ishwaran:08}, {and \textbf{DeepHit}\footnote{\url{https://github.com/chl8856/DeepHit}}}: When there are treatments, we use these models in a two-model (T-learner) approach by training a separate model using samples in the treated ($A=1$) and controlled ($A=0$) groups, respectively. For Cox, we set the coefficient for ridge regression penalty as $\alpha=0.001$. For RSF, we use the default hyper-parameter setting (i.e., $\text{\textit{n\_estimators}}=100$ using a survival tree as the baseline estimator and $\text{\textit{min\_samples\_leaf}}=3$ without maximum depth restriction). {For DeepHit, we use utilize the 3-layer FC-Net with 100 nodes in each layer. We choose the DeepHit's hyper-parameters $\alpha, \sigma$ from a set of possible candidates $\{0.001, 0.01, 0.1, 1, 10\}$ and $\{0.01, 0.1, 1. 10\}$, respectively.}
    \item \textbf{LR-sep}: We utilize the long data format as described in Section 2 of the manuscript and train a separate logistic regression model\footnote{Python package \texttt{scikit-learn}} at each time step $t\in\Tc$ to solve the hazard classification problem utilizing only ``at-risk'' samples whose time-to-event/censoring is at or after $t$. Formally, the logistic regression models are trained based on the log-loss in \eqref{eq:loss_longformat}. When there are treatments, we use LR-sep in a two-model (T-learner) approach by training a separate model using samples in the treated ($A=1$) and controlled ($A=0$) groups, respectively. 
    \item\textbf{CSA}\footnote{\url{https://github.com/paidamoyo/counterfactual_survival_analysis}} \cite{chapfuwa2021enabling}: We use the CSA-INFO model of \cite{chapfuwa2021enabling}, where we use its generative capabilities to approximate target quantities via monte-carlo sampling. We use the code and specifications provided by the authors, in particular we use a hidden dimension of 100, set the imbalance penalty $\alpha=100$ and train for 300 epochs. To create monte carlo approximations, we sample 1000 times from the model for each observation in the test set.  
    \item \textbf{\proposed~(CFR-1)} and \textbf{\proposed~(CFR-2)}: We consider two variants of SurvITE by replacing our $\loss_{ipm}(\theta_{\phi})$ with a balancing term based on the CFRNet\footnote{\url{https://github.com/clinicalml/cfrnet}} proposed in \cite{Shalit:16}: 
    \begin{itemize}[leftmargin=5.5mm]
        \item \textbf{SurvITE (CFR-1)} creates a representation balancing treatment groups at baseline only which is formally given as:
        \begin{equation}
            \loss_{ipm}(\theta_{\phi}) = Wass\big( \{ {\Phi}(x_{i}) \}_{i:a_{i}=1}, \{  {\Phi}(x_{i}) \}_{i:a_{i}=0} \big)
        \end{equation}
        \item \textbf{SurvITE (CFR-2)} creates a representation optimizing for balance of treatment groups \textit{at each time step}
        \begin{equation}
            \loss_{ipm}(\theta_{\phi}) = \sum_{t=1}^{\tmax} Wass\big( \{  {\Phi}(x_{i}) \}_{i:\tilde{\tau}_{i} \geq t, a_{i}=1}, \{  {\Phi}(x_{i}) \}_{i:\tilde{\tau}_{i} \geq t, a_{i}=0} \big)
        \end{equation}
    \end{itemize}
    Note that, in both variants, there is no balancing towards $\mathbb{P}_0$. We implement SurvITE (CFR-1) and SurvITE (CFR-2) with the same network architecture and hyper-parameters with those of SurvITE.
\end{itemize}

\section{Dataset Description and Experimental Setup} \newnumbering

\subsection{Synthetic Experiments}
In this section, we present some illustrations of the properties of the synthetic DGPs. Recall that $\lambda_w(\tau|x)$ is the same across all settings, therefore we focus here on S3 to analyze the interplay of selection bias and event processes. In Fig. \ref{fig:event_times}, we present histograms of event times in S3 for different degrees of selection bias. Note that there is a positive treatment effect (treatment reduces event probabilities) encoded in our DGP; this is clearly visible in the left panel without selection bias as consistently more events occur for control than for treated group. As we add selection bias, it seems that treatment has a \textit{negative effect} on survival after time 10, as more events occur in the treatment group. This correlation is spurious: as $X_2$ linearly increases event hazard, and treatment is selected based on $X_2$ (rightmost panel) or based on a variable correlated with it (middle panel) it seems \textit{as if} treatment increases mortality. Note that this is not the case for $t<10$ because $X_1$ enters $\lambda_w(\tau|x)$ in squared form. 

\begin{figure}[h]
    \centering
    \includegraphics[width=0.8\textwidth]{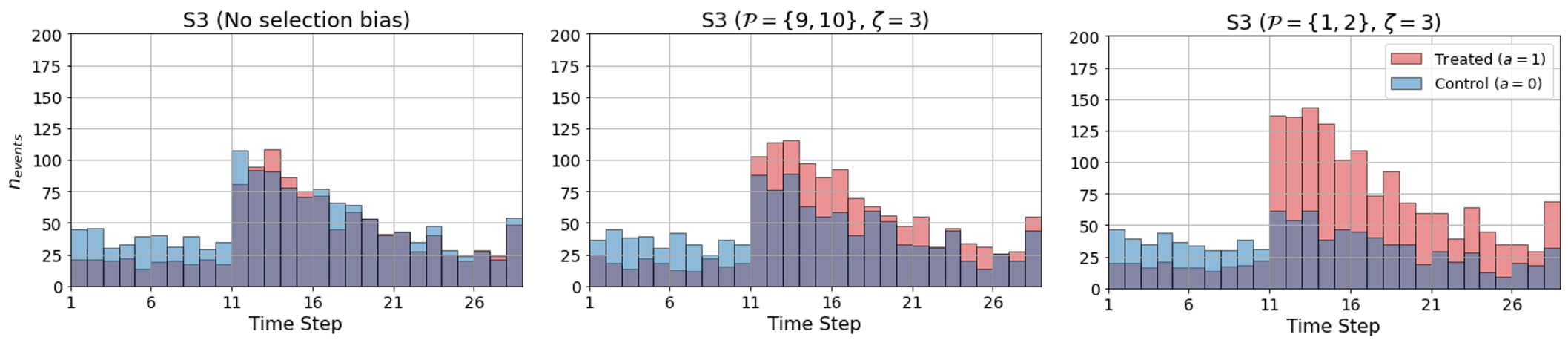}
    \caption{Histograms of event times in S3 for different degrees of selection bias; no selection (left), no overlapping selection covariates (middle) and full overlap (right)}
    \label{fig:event_times}
\end{figure}
In Figure \ref{fig:cov_shift} we further analyze the interplay between event-induced covariate shift and selection bias by considering the distribution of $X_1$ in the at-risk population over time. As $-X_1^2$ appears in the hazard, individuals with small magnitude of $X_1$ have lower probability of survival -- this becomes visible for $\zeta=0$ as the at-risk histogram flattens out over time. Because $X_1$ enters $e(x)$ linearly, when we add selection bias ($\zeta>0$), we observe that the populations not only differ already at baseline, but that the difference appears to become more extreme over time -- this is precisely because the overlapping parts of the population ($|X_1|$ small) have larger event probability, so that the event-induced shift further amplifies the selection boas over time.

\begin{figure}[h]
    \centering
    \includegraphics[width=0.6\textwidth]{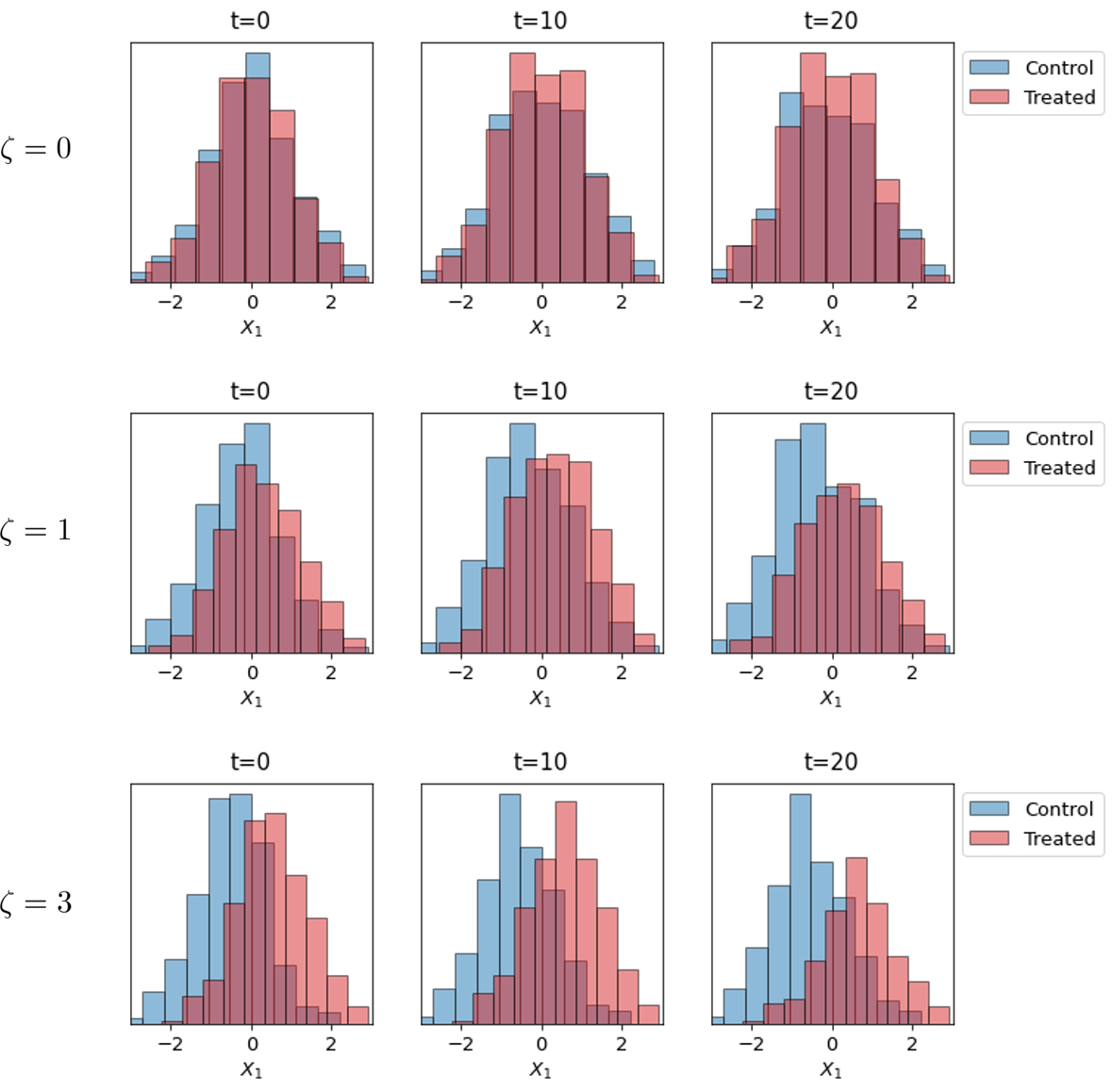}
    \caption{Histograms of $X_1$ in the at-risk population by time (left to right) and $\zeta$ (top to bottom) for S3 with $\mathcal{P}=\{1, 2\}$, highlighting covariate shift due to selection bias (across $\zeta$) and occurred events (across $t$)}
    \label{fig:cov_shift}
\end{figure}

\subsection{Semi-Synthetic Experiments: Twins}
This dataset is derived from all births in the USA between 1989--1991 \cite{Almond:05} where we only focus on the twins. 
We artificially create a binary treatment such that $a = 1 (a = 0)$ denotes being born the heavier (lighter). The outcome of interest is the time-to-mortality (in days) of each of the twins in their first year, thus administratively censored at $t=365$. Since we have records for both twins, we treat their time-to-event outcomes as two potential outcomes, i.e., $\tilde{\tau}(1)$ and $\tilde{\tau}(0)$, with respect to the treatment assignment of being born heavier. 
As previously used in \cite{yoon2018ganite, louizos2017causal}, we obtained 30 features (which has 39 feature dimension after one-hot encoding on categorical features)  for each twin relating to the parents, the pregnancy, and the birth (e.g., marital status, race, residence, number of previous births, pregnancy risk factors, quality of care during pregnancy, and number of gestation weeks prior to birth). We only chose twins weighing less than 2kg and without missing features. To create an observational time-to-event dataset, we selectively observed one of the two twins \textbf{(no censoring)} with selection bias and \textbf{(censoring)} with both selection bias and censoring bias as follows: the treatment assignment is given by $a|x \sim \texttt{Bern}(\sigma(w_{1}^{\top}x + e))$ where $w\sim \texttt{Uniform}(-0.1, 0.1)^{39\times1})$ and $e\sim \Norm(0, 1^{2})$, and the time-to-censoring is given by $C\sim \texttt{Exp}(100 \cdot \sigma(w_{2}^{\top} x))$ where $w_{2}\sim \Norm(0, 1^{2})$. 

For continuous-time models (i.e., Cox, RSF, and CSA), we use the original time resolution in days without discarding any information. For discrete-time models (i.e., LR-sep, \proposed~, and variants of \proposed), we use a non-uniform discretization -- i.e. resolution of days in the first 30 days and months after the first 30 days -- because most of the events are concentrated in the first 30 days (approximately 87\% of the events occur within that period).

\subsection{Performance Metrics}
Once \proposed~(or SurvIHE) is trained, we can simply estimate the (treatment-specific) survival function based on the estimated hazard functions as the following:
\begin{equation}
    \hat{S}^{a}(\tau|x) = \prod_{t\leq \tau} \big( 1 - h_{a,t}(\Phi(x)) \big)~~~~~~~\text{for}~~a\in\{0,1\}.
\end{equation}

\textbf{Heterogeneous Treatment Effects.} 
For synthetic experiments where we have the ground-truth treatment-specific survival functions i.e., $S^{1}(\tau|x)$ and $S^{0}(\tau|x)$, we evaluate $HTE_{surv}(\tau|x) = S^{1}(\tau|x) - S^{0}(\tau|x)$ and $HTE_{rmst}(x; L) = 
\sum_{t_{k} \leq L} \big(S^{1}(t_{k}|x) - S^{0}(t_{k}|x) \big) \cdot (t_{k}- t_{k{-}1})$ in terms of the averaged root mean squared error (RMSE) of the estimation:
\begin{align}
\epsilon_{HTE_{surv}}(t) &= \sqrt{\frac{1}{n}\sum_{i=1}^{n} \big(HTE_{surv}(t|x_{i}) - \widehat{HTE}_{surv}(t|x_{i})\big)^{2}}, \\
\epsilon_{HTE_{rmst}}(L) &= \sqrt{\frac{1}{n}\sum_{i=1}^{n} \big(HTE_{rmst}(x_{i}; L) - \widehat{HTE}_{rmst}(x_{i}; L)\big)^{2}}. \label{eq:rmse_hte_rmst}
\end{align}
Here $\widehat{HTE}_{surv}(t|x) = \hat{S}^{1}(\tau|x) - \hat{S}^{0}(\tau|x)$ and $\widehat{HTE}_{rmst}(x; L) = \sum_{t_{k} \leq L} \big(\hat{S}^{1}(t_{k}|x) - \hat{S}^{0}(t_{k}|x) \big) \cdot (t_{k}- t_{k{-}1})$ where $(t_{k} - t_{k-1})$ may vary depending on how the continuous time is discretized (e.g., non-uniform time intervals for the Twins dataset).

For semi-synthetic experiments where we have the ground-truth treatment-specific time-to-event outcomes but not the treatment-specific survival functions, we only report $\epsilon_{HTE_{rmst}}(L)$ in \eqref{eq:rmse_hte_rmst} where the ground-truth $HTE_{rmst}(x; L)$ is defined in terms of the ground-truth time-to-event outcomes, i.e., $HTE_{rmst}(x; L) = (\min(T(1), L) - \min(T(0),L))$ where $T(1)$ and $T(0)$ are the time-to-event given $a=1$ and $a=0$, respectively.

\textbf{(Treatment-Specific) Survival Functions.}  For evaluating the estimation performance of the (treatment-specific) survival functions, we evaluate the averaged RMSE of these estimations as the following:
\begin{equation} \label{eq:c_index}
    \epsilon_{S^{a}}(t) =\sqrt{ \frac{1}{n}\sum_{i=1}^{n} \big(S^{a}(t|x_{i}) - \hat{S}^{a}(t|x_{i})\big)^{2}}.
\end{equation}

\textbf{Discriminative Performance.} 
For assessing the survival predictions of all the survival models with respect to how well the predictions discriminate among individual risks, we use the concordance index (C-Index) \cite{uno2011cindex}: 
\begin{equation}
    \text{C-Index}(t) = \PR\big(\hat{S}(t|x_{i}) < \hat{S}(t|x_{j}) \big| \tilde{\tau}_{i} < \tilde{\tau}_{j}, \tilde{\tau}_{i} \leq t, \delta_{i} = 1 \big)
\end{equation}
where $\hat{S}(t|x) = a\cdot\hat{S}^{1}(t|x) + (1-a)\cdot \hat{S}^{0}(t|x)$ is the survival prediction given treatment $a$. The resulting C-Index in \eqref{eq:c_index} tells us how well the given survival model discriminates the individual risks among the events that occur before or at time $t$.

\section{Additional Experiments}
\newnumbering

\subsection{Additional Results on the Synthetic Experiments}
In this subsection, we report the additional results on the synthetic experiments that were not provided in the manuscript due space constraints. 

{
In Table \ref{Table:DeepHit_added}, we report the performance comparison using DeepHit with respect to the estimations on both $S^{0}(t|x)$ and $HTE_{surv}(t|x)$ for the synthetic settings S3 and S4 with $\zeta=3$ and $\mathcal{P}=\{9,10\}$ at $L=10$.  
We observe that DeepHit performs worse than the \proposed~architecture without IPM term, indicating that our model architecture alone is better suited for estimation of treatment-specific survival functions (note that \cite{Changhee:AAAI18} focused mainly on discriminative (predictive) performance, and not on the estimation of the survival function itself). Therefore, upon addition of the IPM-terms, the performance gap between \proposed~and DeepHit only becomes larger.}
\begin{table}[t!]
	\caption{RMSE on estimation of $S^{0}(t|x)$ and $\text{HTE}_{surv}(x)$  (mean $\pm$ 95\%-CI) for the synthetic settings S3 and S4 with $\zeta=3$ and $\mathcal{P}=\{9,10\}$ at $L=10$.} \label{Table:DeepHit_added}
	\begin{center}
	\footnotesize
        \begin{tabular}{c c c c c}
        \toprule
        \multirow{2}{*}{\textbf{Methods}}
        &\multicolumn{2}{c}{RMSE on $S^{0}(t|x)$}
        &\multicolumn{2}{c}{RMSE on $HTE_{surv}(t|x)$} \\
        &S3&S4&S3&S4\\\midrule
        Cox	                &0.127$\pm$0.002 &0.127$\pm$0.001 &0.101$\pm$0.004 &0.099$\pm$0.004 \\
        RSF	                &0.074$\pm$0.005 &0.079$\pm$0.005 &0.081$\pm$0.003 &0.084$\pm$0.004 \\
        LR-sep          	&0.112$\pm$0.002 &0.115$\pm$0.006 &0.096$\pm$0.003 &0.099$\pm$0.005 \\
        DeepHit         	&0.095$\pm$0.012 &0.087$\pm$0.003 &0.107$\pm$0.014 &0.095$\pm$0.007 \\
        CSA	                &0.155$\pm$0.005 &0.147$\pm$0.001 &0.176$\pm$0.025 &0.148$\pm$0.011 \\ \midrule
        \proposed~(no IPM)	&0.086$\pm$0.008 &0.088$\pm$0.011 &0.071$\pm$0.011 &0.079$\pm$0.012 \\
        \proposed~(CFR-1)	&0.084$\pm$0.003 &0.097$\pm$0.009 &0.068$\pm$0.009 &0.083$\pm$0.005 \\
        \proposed~(CFR-2)	&0.059$\pm$0.003 &0.085$\pm$0.020 &0.061$\pm$0.009 &0.073$\pm$0.011 \\
        \proposed         	&\textBF{0.055$\pm$0.007} &\textBF{0.063$\pm$0.010} &\textBF{0.060$\pm$0.009} &\textBF{0.063$\pm$0.004} \\
        \bottomrule
        \end{tabular}
	\end{center}
\end{table}

\begin{figure*}[!h]
    \centering
    \includegraphics[width=\textwidth]{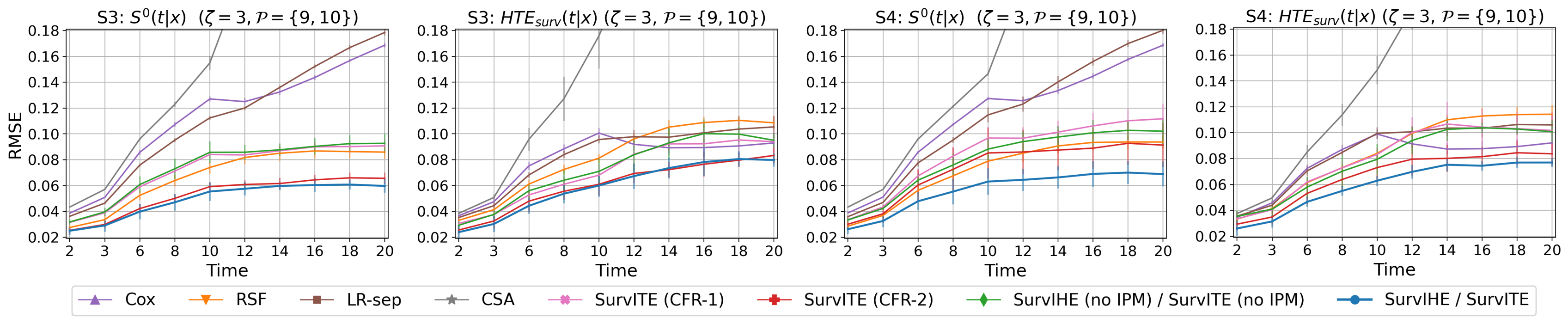}
	\caption{RMSE of estimating the treatment-specific survival function $S^{0}(t|x)$ and the treatment effect $HTE_{surv}(t|x)$ for different time steps across synthetic settings (Lower is better). Averaged across 5 runs; the error bar indicates 95\%-confidence interval. } \label{fig:survivalfunc_errorbar} 
\end{figure*}
\begin{figure*}[!h]
    \centering
    \includegraphics[width=\textwidth]{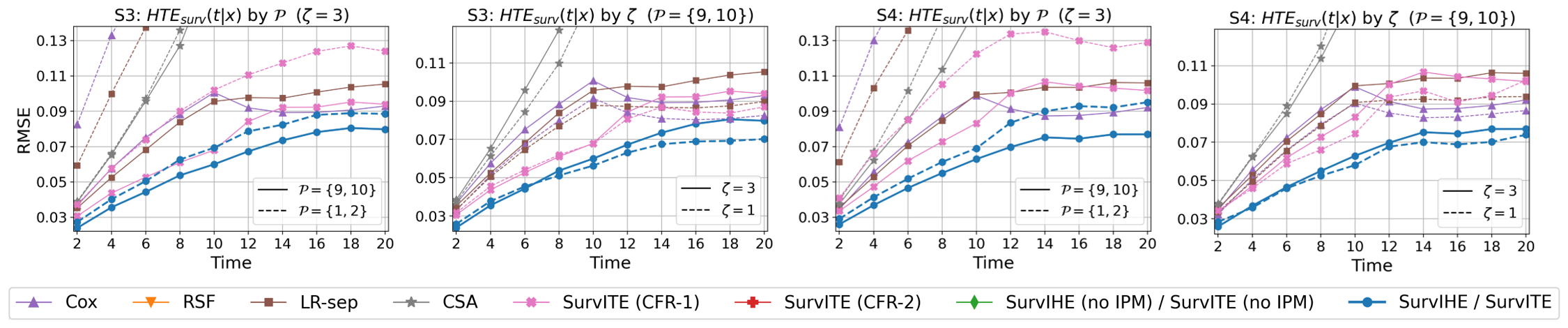}
	\caption{RMSE of estimating the treatment effect $HTE_{surv}(t|x)$ for different time steps across synthetic settings (Lower is better) for methods not presented in the main text. Averaged across 5 runs.} \label{fig:survivalfunc_othermethods} 
\end{figure*}
\begin{figure*}[!h]
    \centering
    \includegraphics[width=\textwidth]{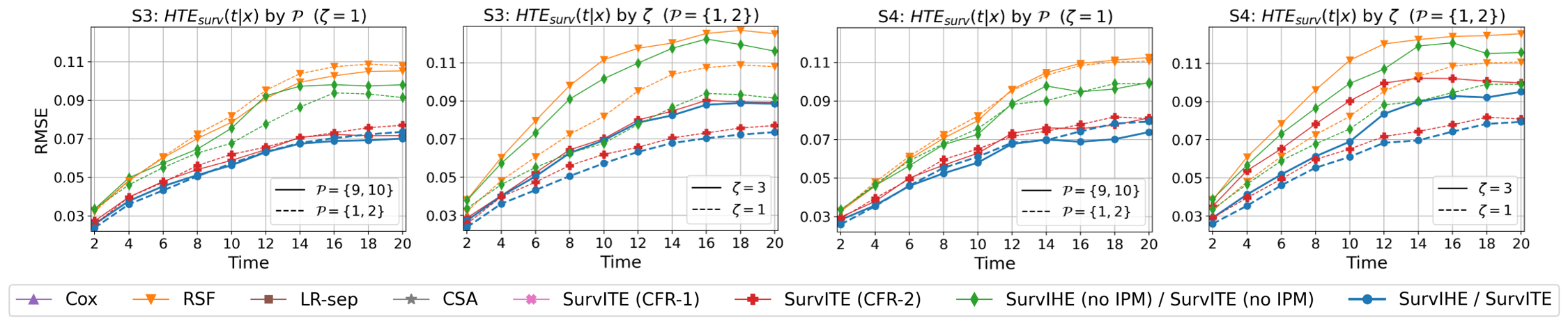}
	\caption{RMSE of estimating the treatment effect $HTE_{surv}(t|x)$ for different time steps across synthetic settings (Lower is better) for settings not presented in the main text. Averaged across 5 runs.} \label{fig:survivalfunc_others} 
\end{figure*}

Figure \ref{fig:survivalfunc_errorbar} shows the performance of estimations on $S^{0}(t|x)$ and $HTE_{surv}(t|x)$ with error bars (omitted in the main text for readability), Figure \ref{fig:survivalfunc_othermethods} shows the performance of $HTE_{surv}(t|x)$ estimation for survival methods that were not presented in the main text to ensure readability, and Figure \ref{fig:survivalfunc_others} shows the performance of $HTE_{surv}(t|x)$ estimation for synthetic scenarios (combinations of $\mathcal{P}$ and $\zeta$) not provided in the main text due to space constraints. In all cases, we observe that \proposed~(/SurvIHE) outperforms all other methods.

\begin{figure*}[!h]
    \centering
    \includegraphics[width=\textwidth]{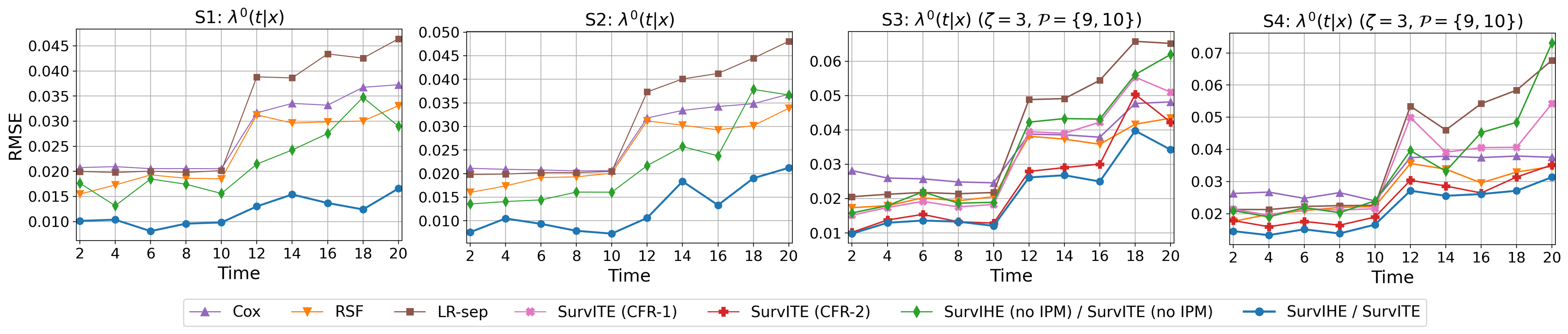}
	\caption{RMSE of estimating the treatment-specific hazard function $\lambda^{0}(t|x)$ for different time steps across synthetic settings (Lower is better). Averaged across 5 runs.} \label{fig:rmsehaz} 
\end{figure*}

In Figure \ref{fig:rmsehaz} we present the RMSE of estimating the \textit{hazard} function instead of the \textit{survival} function as in the main text\footnote{Note that we excluded CSA-INFO from this plot, as it only outputs real-valued time-to-event predictions, which makes it unclear how to best use it to directly estimate hazard functions. }. The results for hazard estimation largely mimic the ones presented in the main text; in particular, we observe that \proposed~(/SurvIHE) performs best throughout. Noteably, the gaps in performance across all methods at later time-steps appear somewhat smaller for hazard than for survival functions; this is expected as the errors on hazards accumulate when the survival function is computed from them. 

\begin{figure*}[!h]
    \centering
    \includegraphics[width=\textwidth]{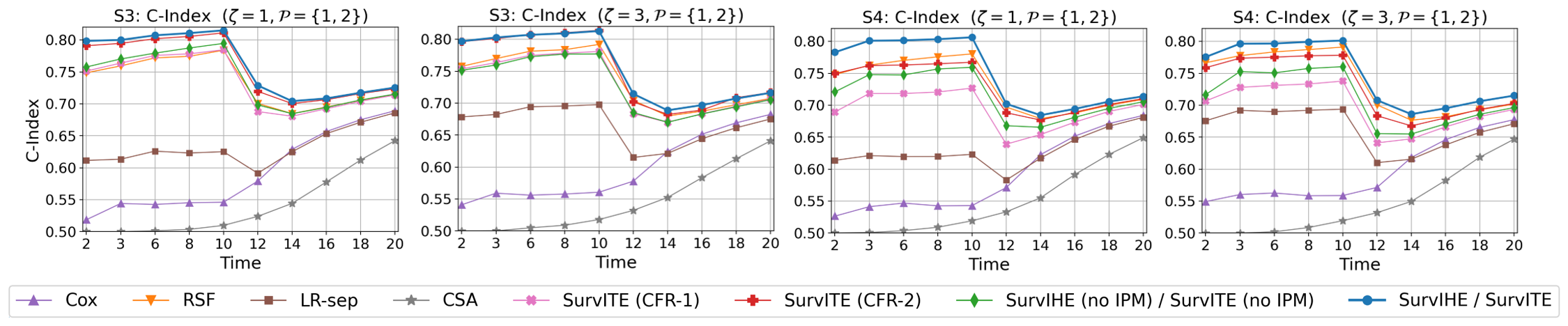}
    \caption{C-Index for different time steps for S3 and S4 with $\zeta=3$ and $\mathcal{P}=\{1,2\}$ (Higher is better). Averaged across 5 runs.} \label{fig:cindex} 
\end{figure*}

In addition, in Figure \ref{fig:cindex}, we report the discriminative performance of the various survival models for synthetic scenarios S3 and S4 with $\zeta=3$ and $\mathcal{P}=\{1,2\}$ across different time steps. We evaluate the discriminative performance in terms of the C-index defined in \eqref{eq:c_index}. The figure shows that \proposed~performs the best also in terms of discriminative performance throughout different scenarios and different time steps due to the accurate estimation of the treatment-specific survival functions.

\subsection{Sensitivity Analysis}\label{sec:sensitivity_analysis}
In this subsection, we investigate the effect of the balancing coefficient $\beta$ in Figure \ref{fig:betas} on the estimation performance of the HTE, and the discriminative performance of the survival predictions. As expected, Figure \ref{fig:betas} shows that \proposed~with a proper amount of IPM regularization improves the treatment effect estimation: imposing too much regularization will make the representation estimator unable to maintain important information for estimating treatment-specific hazard functions while setting regularization too low will not balance the representation from the different sources of covariate shift. Similarly, if the representation is balanced too much, it will lose discriminative power which will eventually make the trained model less useful. In this context, due to the absence of counterfactual information in practice, we propose to select the balancing coefficient $\beta$ by increasing the value starting from the lowest value in the set of possible candidates $\{1., 0.1, 0.01, 0.001, 0.0001\}$ as long as the method maintains good discriminative performance on the validation set (and stop when discriminative performance deteriorates). In our experiments, we choose $\beta=0.001$, which is the largest value that provides good discriminative performance based on the validation set (see the right hand panel in Figure \ref{fig:betas}).
\begin{figure*}[!h]
    \centering
    \includegraphics[width=0.72\textwidth]{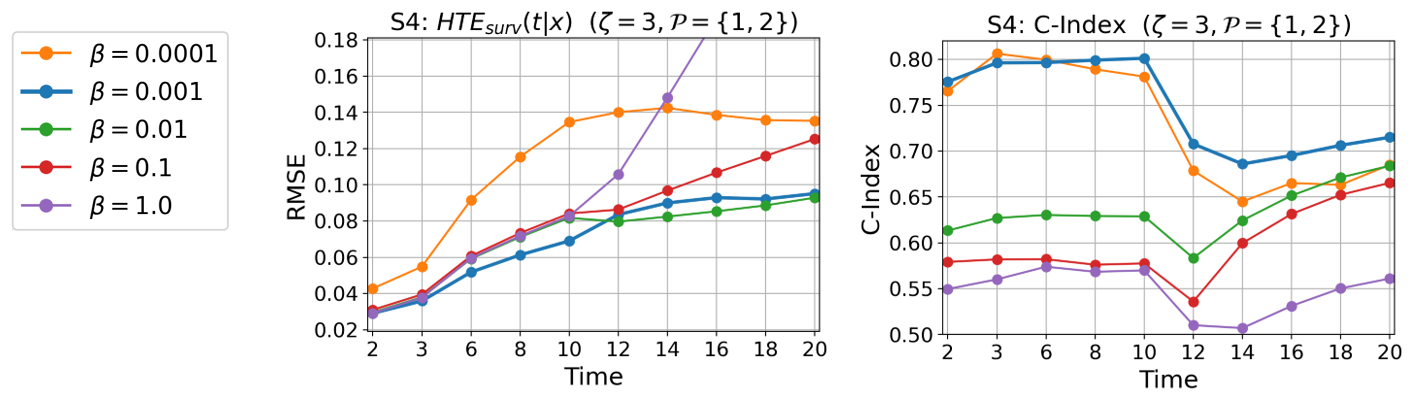}
	\caption{RMSE of estimating the treatment effect $HTE_{surv}(t|x)$ and C-index of the survival predictions with different $\beta$ on S4 with $\zeta=3$ and $\mathcal{P}=\{1,2\}$. Averaged across 5 runs.} \label{fig:betas} 
\end{figure*}

\subsection{\proposed~Variants with Smoothing and Sharing Parameters} \label{sec:smoothing}
In this subsection, we further investigate \proposed~variants with techniques that can address the practical issue of potentially having too many separate hypothesis estimators for large $t_{max}$. Figure \ref{fig:parameters} compares the estimation performance of the treatment-specific survival functions, estimation performance of the HTE, and the discriminative performance of survival predictions. When the models are trained with a sufficient number of samples (here: 5000 training samples), the smoothing regularization maintains a very similar performance in terms of estimating the treatment-specific survival functions and the HTE while sacrificing its discriminative performance at early time step. Sharing the parameters of hypothesis estimator across different time steps suffers more performance loss (nonetheless, it still provides reasonable performance) as the flexibility of the network is more restricted. 
On the other hand, when the models are trained with a smaller number of samples (here: 2000 training samples), the smoothing and sharing the network parameters play significant role in improving the estimation performance.

\begin{figure*}[h]
	\centering
	\begin{subfigure}[t]{\textwidth}
		\centering
        \includegraphics[width=0.99\textwidth]{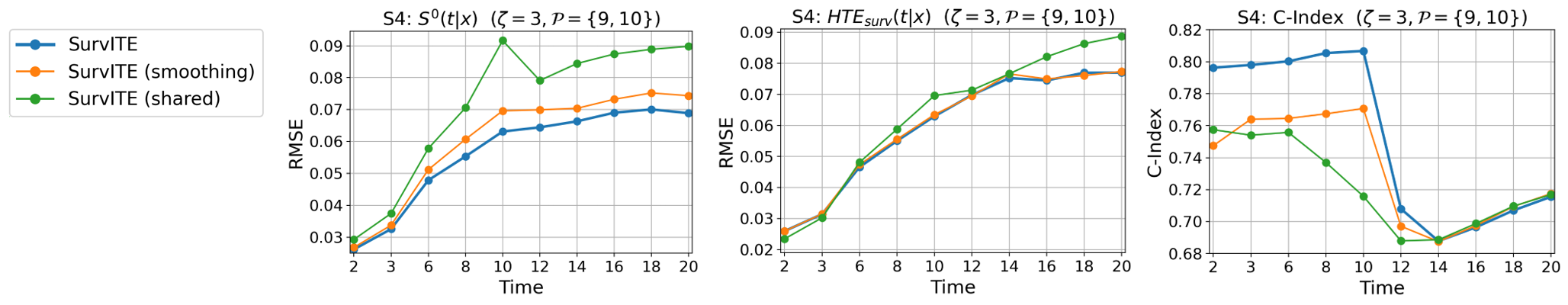}
		\caption{Training samples: 5000}
	\end{subfigure}\hfill
	\begin{subfigure}[t]{\textwidth}
		\centering
        \includegraphics[width=0.99\textwidth]{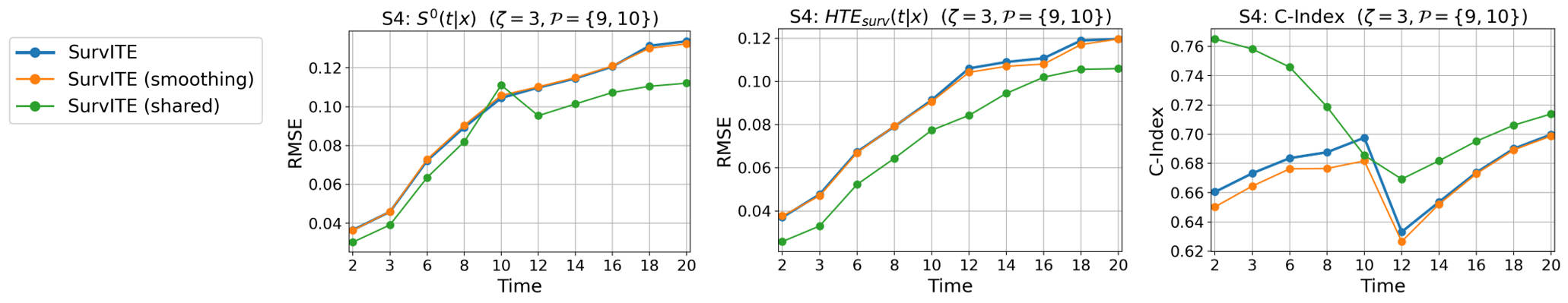}
		\caption{Training samples: 2000}
	\end{subfigure}~
    \caption{RMSE of estimating the treatment-specific survival function $S^{0}(t|x)$, that of the treatment effect $HTE_{surv}(t|x)$, and C-index of the survival predictions $S(t|x)$ using smoothing and parameter sharing on S4 with $\zeta=3$ and $\mathcal{P}=\{9,10\}$. Averaged across 5 runs.} \label{fig:parameters} 
\end{figure*}

\end{document}